%% file: main_v3.tex
\documentclass{article} 
\usepackage{iclr2027_conference,times}

\input{math_commands.tex}

\usepackage{hyperref}
\usepackage{url}
\usepackage{graphicx}
\usepackage{wrapfig}
\usepackage{booktabs}
\usepackage{amsfonts}
\usepackage{nicefrac}
\usepackage{microtype}
\usepackage{xcolor}
\usepackage{algorithm}
\usepackage{algpseudocode}
\usepackage{caption}
\usepackage{amsthm, amssymb}
\usepackage{thmtools,thm-restate}
\usepackage{colortbl}
\usepackage{makecell}
\usepackage{multirow}

\newif\ifshowrev\showrevtrue
\definecolor{revred}{RGB}{200,30,30}

\newcommand{\prev}[1]{#1} 

\definecolor{okgreen}{RGB}{21,128,61}
\definecolor{oursbg}{RGB}{235,240,250} 
\newcommand{\cmark}{{\color{okgreen}\checkmark}}
\newcommand{\xmark}{{\color{red!75!black}$\times$}}

\def\eqref#1{Eq.~(\ref{#1})}

\title{{Adjoint Guidance Flow:\\Amortized Critic Guidance for VLA Policies}}

\iclrfinalcopy

\author{%
Jeongsol Kim$^{1}$, Youngjun Jun$^{1}$, Kyumin Choi$^{2}$, Youngmin Kim$^{1}$,  Seonghyun Jin$^{1}$\\ \textbf{Sunwoo Park$^{1}$,  Jangho Park$^{1}$, Kwanyoung Kim$^{3*}$, Jong Chul Ye$^{1*}$} \\
$^{1}$KAIST \quad $^{2}$SKKU \quad $^{3}$GIST \qquad $^{*}$Co-corresponding authors
}

\newtheorem{lemma}{Lemma}
\newtheorem{remark}{Remark}

\begin{document}

\maketitle
\lhead{Preprint}

\begin{figure}[h!]
    \centering
    \vspace{-0.7cm}
    \includegraphics[width=0.9\linewidth]{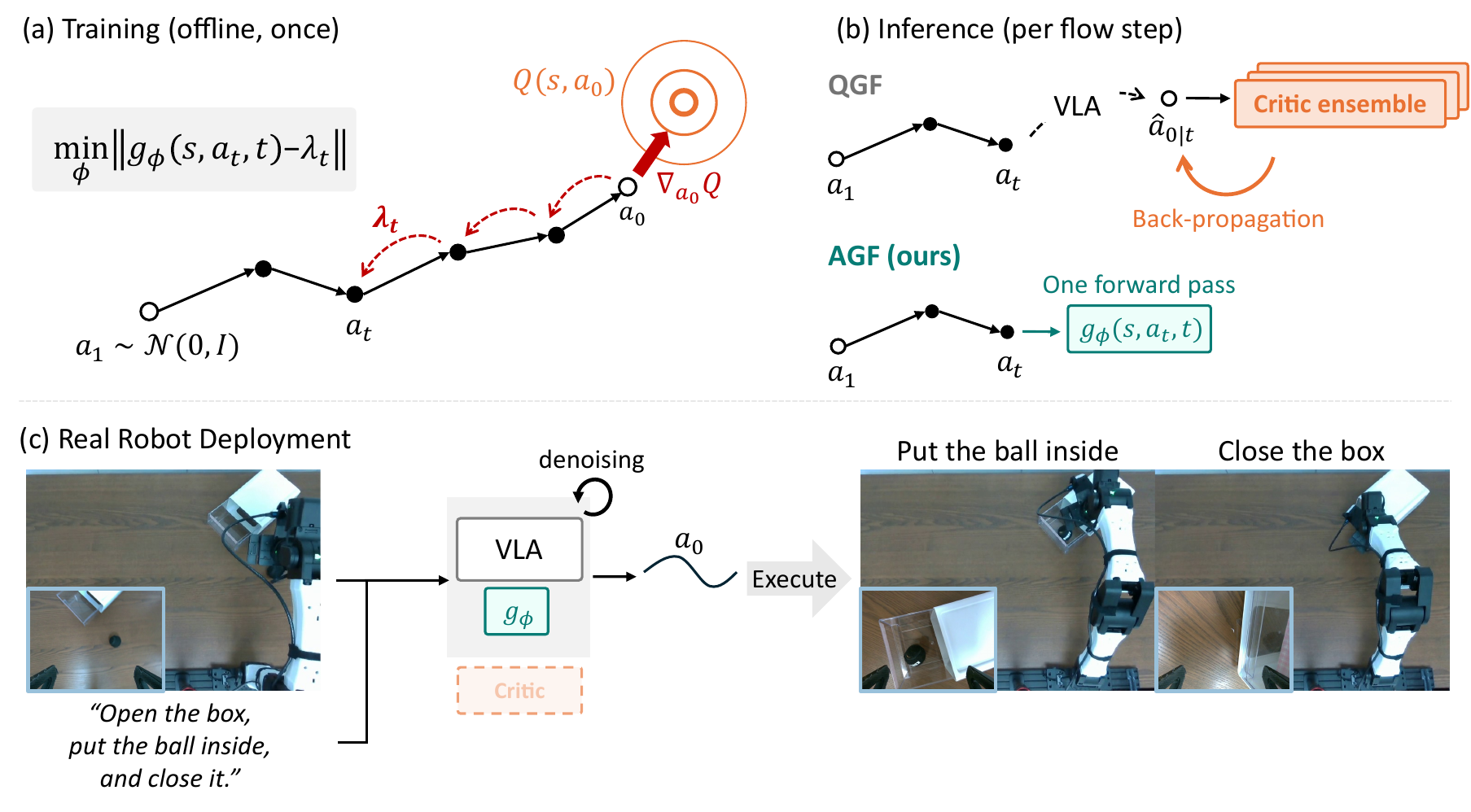}
    \vspace{-0.4cm}
    \caption{
    \textbf{Overview of AGF.} (a) Training: {the terminal critic gradient is carried back through the frozen flow and regressed into $g_\phi$.} (b) Inference: QGF back-propagates a critic ensemble {per} step; AGF runs {one} forward pass of $g_\phi$. (c) Deployment {on a real robot, with no critic ensemble on board.}
}
    \label{fig:method}
\end{figure}

\begin{abstract}
Flow-based Vision-Language-Action (VLA) policies are typically trained by behavior cloning and thus do not explicitly optimize long-term task return. Critic guidance steers generation toward higher-value actions, but existing methods differentiate the critic through a one-step surrogate of the sampler and back-propagate a critic ensemble at every flow step.
In contrast, here we propose Adjoint Guidance Flow (AGF), which amortizes trajectory-aware critic guidance into a lightweight guidance network while preserving the pretrained VLA policy.
Specifically, we formulate critic-guided flow generation as a deterministic optimal control problem, whose optimal guidance is a costate that carries the terminal critic gradient back through the remaining flow, and regress the guidance network onto this costate while keeping both the VLA and critic frozen.
This design provides favorable memory and throughput scaling during training, and inference needs one guidance-network forward pass per step, without the critic ensemble, back-propagation, or adjoint computation.
Across LIBERO, RoboCasa, and LIBERO-Pro, AGF consistently improves pretrained VLAs, remains competitive with critic-guidance and policy-fine-tuning baselines, and is the most robust method when a single guidance strength is deployed across tasks.
Compared with QGF, AGF runs $3.6\times$ faster per guidance step with $7.0\times$ fewer parameters, with comparable and even better performance, showing that critic guidance can be trajectory-aware and lightweight.

Project page: \url{https://jeongsol-kim.github.io/agf/project_page/}
\end{abstract}

\section{Introduction}

\begin{table}[H]
\centering
\vspace{-0.3cm}
\caption{
\textbf{Comparison of critic-guidance paradigms.}
{Pointwise guidance: DPS~\citep{chung2023diffusion}, QGF~\citep{zhou2026test}, QPILOTS~\citep{ruan2026qpilots}, GAF~\citep{yang2026guided}; adjoint matching: AM~\citep{domingo-enrich2025adjoint}, QAM~\citep{li2026qlearning}; $\hat{\va}_{0|t}$ is the Tweedie estimate.}
}
\label{tab:comparison}
\vspace{-0.5em}
\footnotesize
\setlength{\tabcolsep}{5pt}
\renewcommand{\arraystretch}{1.15}
\resizebox{0.95\linewidth}{!}{%
\begin{tabular}{@{}lcccc@{}}
\toprule
Paradigm
& \makecell{Inference-time\\control}
& \makecell{Guidance\\signal}
& \makecell{Guidance computation\\at inference}
& \makecell{Trained\\component} \\
\midrule
Pointwise guidance
& \cmark
& $\nabla_{\va_t}Q^\pi(\vs, \hat{\va}_{0|t})$
& critic forward + backward
& -- \\

Adjoint matching
& \xmark
& $\nabla_{\va_t} Q^\pi(\vs, \va_0)$
& none
& policy \\
\midrule

\textbf{AGF (Ours)}
& \cmark
& $\nabla_{\va_t} Q^\pi(\vs, \va_0)$
& guidance forward
& guidance net \\
\bottomrule
\end{tabular}%
}
\end{table}

Vision-Language-Action (VLA) models have recently emerged as a promising paradigm for learning general-purpose robot policies by leveraging the rich visual and semantic representations of vision-language models (VLMs)\prev{~\citep{pmlr-v229-zitkovich23a, kim2024openvla, intelligence2025pi_}}.
Given visual observations and language instructions, VLAs aim to generate temporally coherent action for accomplishing diverse manipulation tasks, often predicted in chunks ~\citep{Zhao-RSS-23, intelligence2025pi_}.
A central challenge in robot policy learning is the inherently multimodal distribution of feasible actions~\citep{pmlr-v164-mandlekar22a}: regression-based behavior cloning can average across modes, producing actions that match no demonstrated behavior~\citep{zhang2018deep, florence2022implicit}. Diffusion and flow policies address this by modeling the conditional action distribution through iterative generation~\citep{chi2025diffusion}, but typically rely on task-specific visual representations without the broad semantic understanding of pretrained VLMs.

Recent VLA models combine the two: {rather than discretizing actions into tokens as in OpenVLA~\citep{kim2024openvla}, which can interfere with the VLM's pretrained representations, they pair} a pretrained VLM with a diffusion- or flow-based action expert that models the continuous action distribution~\citep{intelligence2025pi_, shukor1844smolvla, bjorck2025gr00t}.
{Despite their expressive action distributions, these policies are commonly trained with behavior cloning~\citep{chi2025diffusion}, which neither optimizes task return nor distinguishes high-value actions from actions that are merely likely under the demonstrations~\citep{wang2023diffusion,psenka2024learning,wang2026q}.}
{Recent works therefore incorporate a learned critic of long-term utility, either adapting the policy toward high-value actions through additional training~\citep{doo2026qflow,shi2026flowdpg,wang2026q} or keeping it fixed and applying critic guidance during generation}\prev{, such as QGF~\citep{zhou2026test}, which steers each generation step with critic gradients~\citep{ruan2026qpilots}}.
The former does not require the critic during inference, but requires additional policy optimization for the downstream task{, whereas} inference-time guidance can incorporate task-specific value information while preserving the pretrained generative policy.

However, existing inference-time critic-guidance methods~\citep{zhou2026test, ruan2026qpilots, yang2026guided} suffer from two main limitations. First, they {differentiate the critic only at the Tweedie estimate, a one-step surrogate of the sampler, and thus ignore how the remaining flow shapes the executed action.}
Second, they require repeated critic back-propagation at every generation step, whose cost is further amplified when a critic ensemble is used for conservative value estimation.
Fine-tuning approaches such as adjoint matching~\citep{domingo-enrich2025adjoint} and QAM~\citep{li2026qlearning} avoid the critic at inference by amortizing adjoint supervision into the policy itself, but require policy back-propagation and modify its weights for each downstream task.

To bridge these alternatives, we propose Adjoint Guidance Flow (AGF), which amortizes trajectory-wise critic adjoints into a separate, lightweight guidance network.
We formulate critic-guided flow generation as a deterministic optimal control problem, where the optimal guidance {at each flow step is a costate: the sensitivity of the final critic value to the current intermediate action through all remaining flow steps.}
We then train the guidance network to directly predict this costate from intermediate flow states.
Once trained, AGF {guides the frozen policy} with a single guidance-network forward pass, without {the critic ensemble} or modifying the pretrained policy (Table~\ref{tab:comparison}){; compared with QGF, this is $3.6\times$ faster per guidance step with $7.0\times$ fewer parameters.} {Our contributions are summarized as follows.}
\begin{list}{\textbullet}{\setlength{\topsep}{2pt}\setlength{\partopsep}{0pt}\setlength{\itemsep}{0pt}\setlength{\parsep}{0pt}\setlength{\leftmargin}{2.5em}\setlength{\labelwidth}{1em}}
\item {To guide a frozen VLA without evaluating or back-propagating the critic at inference, we propose AGF, which amortizes trajectory-aware critic guidance into a lightweight network.}
\item {We characterize the optimal guidance as the costate of a deterministic optimal control problem and regress the guidance network onto costates computed along its own trajectories, smoothed by particle-averaged Jacobians whose noise adds no bias for any number of particles.}
\item {Across LIBERO, RoboCasa, and LIBERO-Pro with three flow-based VLAs and on a real robot, AGF improves every pretrained policy, remains competitive with critic guidance and policy fine-tuning, and keeps its gains under one shared strength.}
\end{list}

\section{Background}

\subsection{Flow-based Model}
Flow-based generative models transport a noise sample ${\va_1} \sim p_1$ to a data sample ${\va_0} \sim p_0$ along the ODE $d{\va_t} = \vv_t({\va_t}) dt$, where the velocity field is parameterized by a neural network ${\vv_\theta}$ trained with the conditional flow matching objective~\citep{lipman2023flow},
\begin{equation}
    \min_\theta  \mathbb{E}_{t,{\va_0} \sim p_0,{\va_1} \sim p_1} \left\| {\vv_t(\va_t | \va_0) - \vv_\theta(\va_t)} \right\|^2,
\end{equation}
where ${\va_t = (1-t)\va_0 + t\va_1}$ is the linear interpolant and ${\vv_t(\va_t|\va_0) = \va_1 - \va_0}$ denotes the conditional target. Samples are generated by integrating the ODE backward from $t=1$ to $t=0$. The posterior mean at time $t$, known as Tweedie's estimate~\citep{efron2011tweedie, kim2021noise2score}, is ${\mathbb{E}[\va_0 | \va_t] = \va_t - t \vv_\theta(\va_t)}$. {Hereafter, $\va$ denotes an action chunk generated conditioned on the state $\vs$, with velocity $\vv_\theta(\va_t,\vs,t)$.}

\subsection{Action-Value Function}
Consider a trajectory $\tau = \{(\vs^{(k)}, \va^{(k)})\}_{k=1}^{{H}}$, where $\vs^{(k)} \in \mathcal{S}$ and $\va^{(k)} \in \mathcal{A}$ denote the state and action at time step $k$, respectively, and ${H}$ denotes the finite horizon. 
A policy $\pi(\va^{(k)}|\vs^{(k)})$ specifies the distribution of actions conditioned on the current state, and a reward function $r_k = r(\vs^{(k)}, \va^{(k)})$ assigns a scalar reward at each time step.
We define the discounted return from time step $k$ as $G_k = \sum_{i=k}^{{H}} \gamma^{i-k} r_i$ where $\gamma\in[0,1]$ denotes the discount factor.
The action-value function under policy $\pi$ is then defined as 
\begin{equation}
    Q^\pi(\vs^{(k)}, \va^{(k)}) = \mathbb{E}_\pi [ G_k | \vs^{(k)}, \va^{(k)}],
\label{eqn:critic}
\end{equation}
which represents the expected return obtained by taking action $\va^{(k)}$ at state $\vs^{(k)}$ and following the policy $\pi$ thereafter~\citep{sutton1998reinforcement, murphy2024reinforcement}. In our setting, the reward is sparse and assigned only at the terminal state. In other words, $r_k=0$ for $k<{H}$ and $r_{{H}}=r(\vs^{({H})})$.
For a fixed policy, the critic is trained with a SARSA~\citep{rummery1994line} style temporal-difference objective~\citep{sutton1988learning},
\begin{equation}
    \min_\omega \mathbb{E}\left [ (Q_\omega(\vs^{(k)}, \va^{(k)}) - y_k)^2\right], \qquad y_k = r_k + \gamma Q_{\bar\omega} (\vs^{(k+1)}, \va^{(k+1)}),
\label{eqn:sarsa_td}
\end{equation}
where $Q_{\bar\omega}$ denotes an EMA-updated target critic and the next action $\va^{(k+1)} \sim \pi(\cdot|\vs^{(k+1)})$ is taken from the same policy rollout. {This critic is frozen after training and is the only value signal in the rest of the paper; every guidance method we compare, AGF included, uses its action gradient.}

\subsection{Critic Guidance}
Recent flow-based VLAs model action generation as a conditional flow process. We use $k$ for the environment step and $t\in[0,1]$ for the continuous flow time. At step $k$, the policy transforms a noisy action chunk $\va_1^{(k)}$ into a clean action $\va_0^{(k)}$ through the learned velocity field.
Since the critic $Q^\pi(\vs,\va)$ is defined on the clean action at $t=0$, critic guidance at $\va_t$ ideally follows
\begin{equation}
    \vg_t = \nabla_{\va_t} \mathbb{E}_{\va_0\mid\va_t} \left[ Q^\pi(\vs,\va_0) \right],
\end{equation}
where we omit the environment-step superscript for simplicity. Monte-Carlo estimation requires sampling actions from $p(\va_0\mid\va_t)$ and evaluating the critic for each sample, which is often computationally expensive.
DPS-type approximation \citep{chung2023diffusion} replaces the expectation of the critic with the critic evaluated at the posterior mean,
\begin{equation}
    \vg_t \approx \nabla_{\va_t} Q^\pi(\vs,\hat\va_{0|t}), \qquad \hat\va_{0|t} = \va_t-t\vv_\theta(\va_t,t),
\end{equation}
where $\hat\va_{0|t}$ is the Tweedie estimate of the clean action.

Recent methods including QGF~\citep{zhou2026test},
QPILOTS~\citep{ruan2026qpilots}, and GAF~\citep{yang2026guided} further
{replace the Jacobian of the Tweedie map, $\partial\hat\va_{0|t}/\partial\va_t=\rmI-t\partial\vv_\theta(\va_t,t)/\partial\va_t$, by the identity, keeping only the critic gradient at $\hat\va_{0|t}$,}
\begin{equation}
    \nabla_{\va_t}Q^\pi(\vs,\hat\va_{0|t})
    =
    \prev{\left(\frac{\partial\hat\va_{0|t}}{\partial\va_t}\right)^{\top}}
    \frac{\partial Q^\pi(\vs,\hat\va_{0|t})}
         {\partial\hat\va_{0|t}}
    \approx
    \frac{\partial Q^\pi(\vs,\hat\va_{0|t})}
         {\partial\hat\va_{0|t}}.
\label{eqn:qgf}
\end{equation}
This avoids back-propagation through the flow model, but ignores how
perturbations at intermediate flow states propagate through the remaining
generation trajectory.

\section{Adjoint Guidance Flow}
\label{sec}

\subsection{Optimal Control Problem}

Consider a pretrained flow policy $\vv_\theta$ that generates a clean action chunk $\va_0 \sim \pi_\theta(\va_0 \mid \vs)$ by integrating the corresponding flow ODE, and an action-value function $Q^\pi(\vs,\va_0)$ trained as in \eqref{eqn:sarsa_td}.
Critic guidance steers the flow trajectory toward higher-value actions by augmenting the pretrained dynamics with an additive control,
\begin{equation}
    d\va_t=\left[\vv_\theta(\va_t,\vs,t)+\vu_t\right]dt,
    \label{eqn:controlled_ode}
\end{equation}
where $\vu_t$ denotes the guidance at flow time $t$.
Rather than constructing $\vu_t$ from pointwise critic gradients at the Tweedie estimate as in \eqref{eqn:qgf}, we formulate it as a deterministic optimal control problem over the generation trajectory, targeting the value of the action it will actually produce.


For a given state $\vs$ and initial noise sample, the time-dependent control $\vu_t$ is optimized by
\begin{equation}
    \vu^\star = \argmax_\vu Q^\pi (\vs, \va_0) - \int_0^1 \frac{1}{2\beta_t} \| \vu_t \|^2 dt,
\label{eqn:oc}
\end{equation}
which is subject to the controlled flow dynamics~\eqref{eqn:controlled_ode}, whose integration from the initial noise determines the terminal action $\va_0$.
The two terms represent complementary objectives. The terminal value $Q^\pi(\vs, \va_0)$ drives the generated action toward high expected return, while the quadratic control cost penalizes the energy of the deviation from the pretrained flow, so that as $\beta_t \to 0$ the controlled dynamics reduce to the pretrained policy. The coefficient $\beta_t$ thus interpolates between pure imitation and pure value seeking.
The following result characterizes the optimal guidance under our generative-time convention, where the flow evolves from $t=1$ (noise) to $t=0$ (clean action).
\begin{restatable}[]{prop}{optimalcontrol}
\label{prop:optimal}
Under the controlled dynamics $d\va_t = [\vv_\theta(\va_t, \vs, t) + \vu_t]dt$, the optimal control of \eqref{eqn:oc} satisfies
\begin{equation}
    \vu_t^\star = -\beta_t \vlamb_t,
\label{eqn:optimal_control}
\end{equation}
where the costate $\vlamb_t$ is initialized by $\vlamb_0 = \nabla_{\va_0}Q^\pi(\vs, \va_0)$ and evolves according to
\begin{equation}
\frac{d\vlamb_t}{dt} = -\left( \frac{\partial \vv_\theta(\va_t,\vs, t)} {\partial \va_t} \right)^\top \vlamb_t.
\label{eqn:costate}
\end{equation}
\end{restatable}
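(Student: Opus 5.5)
The plan is to derive the first-order necessary conditions of \eqref{eqn:oc} with a Lagrange multiplier (Pontryagin-style) argument. We keep the paper's time convention throughout: $\va_1$ is the fixed initial noise, $\va_0$ is the free terminal point, and $dt$ denotes forward flow time, even though sampling integrates from $t=1$ down to $t=0$. We assume $\vv_\theta$ and $Q^\pi$ are $C^1$, so all variations are well defined. We then adjoin the dynamics constraint \eqref{eqn:controlled_ode} with a multiplier $\boldsymbol{\mu}_t$ and form the Lagrangian
\begin{equation*}
\mathcal{L}[\va,\vu,\boldsymbol{\mu}] = Q^\pi(\vs,\va_0) - \int_0^1 \frac{1}{2\beta_t}\|\vu_t\|^2 dt + \int_0^1 \boldsymbol{\mu}_t^\top\Big(\vv_\theta(\va_t,\vs,t)+\vu_t-\frac{d\va_t}{dt}\Big)dt .
\end{equation*}

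First we remove the derivative of the state. Integrating by parts gives $-\int_0^1 \boldsymbol{\mu}_t^\top \dot{\va}_t\,dt = -\boldsymbol{\mu}_1^\top\va_1 + \boldsymbol{\mu}_0^\top\va_0 + \int_0^1 \dot{\boldsymbol{\mu}}_t^\top\va_t\,dt$. We then take first variations in three groups.
\begin{itemize}
\item Varying $\vu_t$ gives the stationarity condition $-\vu_t/\beta_t + \boldsymbol{\mu}_t = 0$, that is, $\vu_t = \beta_t\boldsymbol{\mu}_t$.
\item Varying $\va_t$ at interior times gives $\dot{\boldsymbol{\mu}}_t + (\partial\vv_\theta/\partial\va_t)^\top\boldsymbol{\mu}_t = 0$.
\item The boundary terms give the transversality condition. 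The variation at $t=1$ vanishes because the noise is fixed. The variation at the free endpoint $t=0$ gives $\nabla_{\va_0}Q^\pi(\vs,\va_0) + \boldsymbol{\mu}_0 = 0$.
\end{itemize}

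Setting $\vlamb_t := -\boldsymbol{\mu}_t$ turns these conditions into the statement. The control becomes $\vu_t^\star = -\beta_t\vlamb_t$, which is \eqref{eqn:optimal_control}. The initial condition becomes $\vlamb_0 = \nabla_{\va_0}Q^\pi(\vs,\va_0)$. Because the costate equation is linear, it is unchanged by the sign flip and reads $\frac{d\vlamb_t}{dt} = -(\partial\vv_\theta/\partial\va_t)^\top\vlamb_t$, which is \eqref{eqn:costate}.

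As a sanity check, I will verify that $\vlamb_t$ is the sensitivity $\partial Q^\pi(\vs,\va_0)/\partial\va_t$ of the terminal value to the state at time $t$ along the flow. Let $\boldsymbol{\delta}_t$ be a perturbation carried by the linearized flow, $\dot{\boldsymbol{\delta}}_t = (\partial\vv_\theta/\partial\va_t)\boldsymbol{\delta}_t$. The costate equation then gives $\frac{d}{dt}(\vlamb_t^\top\boldsymbol{\delta}_t) = 0$, so $\vlamb_t^\top\boldsymbol{\delta}_t = \nabla Q^\pi\cdot\boldsymbol{\delta}_0$, which matches the paper's interpretation of $\vlamb_t$.

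This check also confirms the sign of the control. Sampling moves $t$ downward, so over a step $|dt|$ the update is $\Delta\va = -(\vv_\theta+\vu_t)|dt|$. With $\vu_t = -\beta_t\vlamb_t$ the guided step therefore moves along $+\beta_t\vlamb_t$, which increases $Q^\pi$, as maximization requires.

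The main obstacle is bookkeeping rather than analysis. Because generation runs backward in $t$, the free endpoint is $t=0$ rather than the usual $t=1$. Getting the boundary term and the sign of $\vlamb$ right is where errors creep in, so I would double-check them with the invariance argument above. I will also note that the result is a necessary condition for optimality (a stationary point of $\mathcal{L}$), not a global optimality claim, since $Q^\pi$ is nonconcave. The proposition is stated as ``satisfies,'' so no convexity argument is required.
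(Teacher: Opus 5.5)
Your argument is correct and all the signs check out. Integrating by parts in the paper's own time variable puts the free endpoint at the lower limit, so the boundary term is $+\boldsymbol{\mu}_0^\top\va_0$ and the transversality condition is $\boldsymbol{\mu}_0=-\nabla_{\va_0}Q^\pi(\vs,\va_0)$. The substitution $\vlamb_t=-\boldsymbol{\mu}_t$ then gives exactly \eqref{eqn:optimal_control} and \eqref{eqn:costate}.

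The paper handles the reversed time differently. It substitutes $\tau=1-t$ to obtain the textbook forward-time problem with terminal reward at $\tau=1$, and writes the Hamiltonian $\mathcal{H}=\vlamb_\tau^\top(\vf_\theta+\vu_\tau)-\frac{1}{2\beta_\tau}\|\vu_\tau\|_2^2$. It then cites Pontryagin's maximum principle to get $\vu_\tau^\star=\beta_\tau\vlamb_\tau$ with the standard costate equation and $\vlamb_1=\nabla Q^\pi$. Finally it maps back via $\vf_\theta=-\vv_\theta$ and $\vu_\tau=-\vu_t$, noting that the two sign flips cancel in the costate equation. So in the paper the sign bookkeeping sits in the drift, the control and $d\tau=-dt$, with $\vlamb$ itself untouched; in yours it sits entirely in the multiplier.

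What each route buys:
\begin{itemize}
\item The paper's route is brief and gets the maximization form of the principle by citation, at the price of a convention translation that is easy to get wrong.
\item Yours is self-contained and needs no change of variables. Your invariance check also proves the closed form \eqref{eqn:costate_closed}, which the paper only asserts.
\end{itemize}
Since $\mathcal{H}$ is strictly concave in $\vu$ and the control is unconstrained, your stationarity condition coincides with Hamiltonian maximization, so nothing is lost.

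If you want the multiplier step to be rigorous rather than formal, extend your invariance identity to controlled perturbations. With $\dot{\boldsymbol{\delta}}_t=(\partial\vv_\theta/\partial\va_t)\boldsymbol{\delta}_t+\delta\vu_t$ and $\boldsymbol{\delta}_1=\bm{0}$, one has $\frac{d}{dt}(\vlamb_t^\top\boldsymbol{\delta}_t)=\vlamb_t^\top\delta\vu_t$. Hence the first variation of the objective is $-\int_0^1(\vlamb_t+\vu_t/\beta_t)^\top\delta\vu_t\,dt$, which vanishes for every $\delta\vu$ exactly when $\vu_t=-\beta_t\vlamb_t$.
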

The result follows from the standard forward-time optimal-control formulation under the reparameterization $\tau=1-t$; we provide the derivation in Appendix~\ref{app:prop1}.
Proposition~\ref{prop:optimal} has a simple interpretation. Integrating the costate dynamics in \eqref{eqn:costate} from the terminal condition gives the closed form
\begin{equation}
    \vlamb_t = \left(\frac{\partial \va_0}{\partial \va_t}\right)^\top \nabla_{\va_0} Q^\pi(\vs, \va_0),
    \label{eqn:costate_closed}
\end{equation}
where $\partial \va_0 / \partial \va_t$ is the Jacobian of the map that carries the intermediate state $\va_t$ to the terminal action $\va_0$ under the controlled dynamics. The costate therefore measures how the critic value of the final action responds to a perturbation of the current state.
The optimal control $\vu^\star_t=-\beta_t\vlamb_t$ steers each intermediate state in the direction that most increases the critic value of the action it will produce, rather than the value of the Tweedie estimate.

A direct approach is to amortize the adjoint into the flow policy itself, as in adjoint matching~\citep{domingo-enrich2025adjoint}, which fine-tunes the policy under a memoryless stochastic sampler.
For large pretrained VLAs, however, this updates the policy parameters at substantial memory and compute cost, may alter the pretrained behavior, and fixes the guidance strength into the fine-tuned weights.
Instead, we train a lightweight guidance network $g_\phi$ to approximate $\vlamb_t$ while keeping the VLA frozen, so that no VLA parameter gradients are computed and the guidance can be attached or removed without modifying the policy.

\begin{figure}
    \centering
    \includegraphics[width=0.9\linewidth]{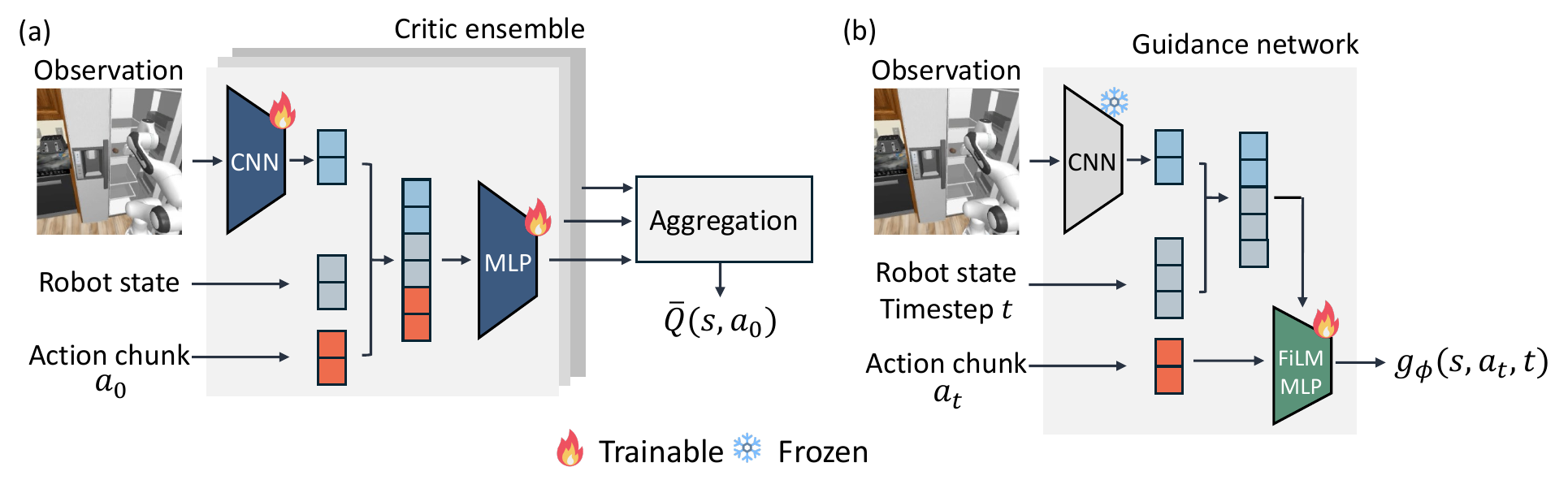}
    \caption{{Guidance-network architectures.} (a) The critic ensemble predicts $Q^\pi(\vs,\va_0)$ from the observation, robot state, and clean action chunk. (b) {Ours: $g_\phi$} reuses {one} frozen critic encoder and {predicts} the trajectory-wise adjoint $g_\phi(\vs,\va_t,t)$ {with a FiLM-conditioned MLP}.}
    \label{fig:arc}
    \vspace{-0.6cm}
\end{figure}

\subsection{Guidance Network Training}
\label{subsec:train}
Concretely, our guidance network $g_\phi$ reuses a frozen visual encoder of one critic ensemble member and predicts the costate with a lightweight FiLM-modulated MLP~\citep{perez2018film} conditioned on the visual feature, flow timestep, and robot state where available (Figure~\ref{fig:arc}b).

Training $g_\phi$ requires a costate target at each intermediate action state, built in two steps: generate a controlled flow trajectory with the current guidance network, then propagate the terminal critic gradient backward through its transitions.
The control $\vu_t = -\beta_t g_\phi(\vs, \va_t, t)$ gives the Euler transition
\begin{equation}
    \va_{t+\Delta t} = \va_t + \Delta t \left[ \vv_\theta(\va_t, \vs, t) - \beta_t g_\phi(\vs, \va_t, t) \right], \qquad \Delta t < 0.
\label{eqn:euler}
\end{equation}

After generating the trajectory from $t=1$ to $t=0$, we initialize the terminal costate as $\vlamb_0 = \nabla_{\va_0} Q^\pi(\vs, \va_0)$ and propagate it backward through the discretized transitions. Consistent with the optimal-control formulation, the realized control $\vu_t$ is held fixed during this propagation, so each transition contributes the Jacobian $\partial \va_{t+\Delta t}/\partial \va_t |_{\vu_t} = \rmI + \Delta t (\partial\vv_\theta/\partial\va_t)$, giving the recursive update
\begin{equation}
    \vlamb_t = \left( \rmI + \Delta t {J_t(\va_t)} \right)^{\top} \vlamb_{t+\Delta t}, {\qquad \text{where} \quad J_t(\va_t) := \frac{\partial \vv_\theta(\va_t, \vs, t)}{\partial \va_t}}.
\label{eqn:adjoint}
\end{equation}
This recursion is the discrete counterpart of the continuous costate dynamics in \eqref{eqn:costate}{, and $-\beta_t\vlamb_t$ is the optimal control~\eqref{eqn:optimal_control} of the discretized dynamics}. {Therefore,} we train the guidance network by matching these trajectory-wise adjoint targets,
\begin{equation}
    \mathcal{L}_{\text{AGF}}(\phi) = \mathbb{E}_{\vs, \va_1 \sim \mathcal{N}(0,\rmI), t} \left[ \left\| g_\phi(\vs, \va_t, t) - \text{sg}[\vlamb_t] \right\|_2^2 \right],
\label{eqn:agf_loss}
\end{equation}
where $\text{sg}[\cdot]$ denotes stop-gradient.
{Since the optimal control is $-\beta_t\vlamb_t$, we regress $g_\phi$ onto $\vlamb_t$ itself and set $\beta_t=1$ during training; at inference the guidance scale is exposed as a weight $w$, deploying $\vu_t=-wg_\phi(\vs,\va_t,t)$, so the strength can be adjusted without retraining.}

Importantly, the guidance network affects the adjoint targets through the trajectory it induces, but its Jacobian is not included in the adjoint recursion. This is not an approximation but a consequence of the optimal-control formulation: in Pontryagin's principle{~\citep{pontryagin1987mathematical}}, the adjoint is defined along the dynamics with the realized control held fixed, whereas including the Jacobian of $g_\phi$ would compute the sensitivity of a different, closed-loop system. The stop-gradient in \eqref{eqn:agf_loss} enforces the same separation at the loss level. 
As $g_\phi$ is updated, we regenerate the controlled trajectories and recompute their adjoint targets, yielding iterative on-policy refinement.

\subsection{Locally Regularized Adjoint Estimation}
{The adjoint target in~\eqref{eqn:adjoint} depends on the flow Jacobian $J_t(\va_t)$, which provides the exact local sensitivity along a given trajectory but can vary substantially across nearby intermediate actions in neural flow models.}
{Small changes in the guided trajectory can therefore perturb the adjoint targets, which is particularly relevant under on-policy refinement, where successive updates of the guidance network continuously shift the intermediate action trajectory.}
To obtain a more locally regularized sensitivity estimate, we perturb each intermediate action with Gaussian particles, $\va_{t, m} = \va_t+\sigma \veps_m$ where $\veps_m \sim\mathcal{N}(0,\rmI)$,
and average the corresponding flow Jacobians,
\begin{equation}
    \widehat{J}_t(\va_t) = \frac{1}{M} \sum_{m=1}^{M} J_t \left(\va_t+\sigma\veps_m\right),
\label{eqn:particle_jacobian}
\end{equation}
where $M$ denotes the number of particles and $\sigma$ controls the local perturbation scale.
Under standard regularity conditions that permit exchanging differentiation and expectation, we have $\mathbb{E}_{\veps} \left[ J_t(\va_t+\sigma\veps) \right] = \nabla_{\va_t} \mathbb{E}_{\veps} \left[ \vv_\theta(\va_t+\sigma\veps,\vs,t)\right]$.
Thus, $\widehat{J}_t(\va_t)$ is a Monte Carlo estimate of the Jacobian of a locally Gaussian-smoothed flow field, rather than an ad hoc average of pointwise Jacobians. {Because the particles are redrawn independently at each flow step, the resulting target is an unbiased estimate of the costate propagated with the smoothed Jacobian for any $M$, and $M$ only sets its variance (Lemma~\ref{prop:particle}, Appendix~\ref{app:prop4}).}
The adjoint is then propagated as
\begin{equation}
    \widehat{\vlamb}_t=\left( \prev{\rmI + \Delta t}\widehat{J}_t(\va_t)\right)^\top\widehat{\vlamb}_{t+\Delta t},
\label{eqn:particle_adjoint}
\end{equation}
with the same terminal condition $\widehat{\vlamb}_0=\nabla_{\va_0}Q^\pi(\vs,\va_0)$.
This smoothing suppresses highly localized Jacobian variations while retaining sensitivity patterns that persist in a neighborhood of the  current trajectory.
Accordingly, $\sigma$ controls a trade-off between pointwise fidelity and local regularity: as $\sigma\rightarrow0$, the estimator approaches the original pointwise adjoint, while larger $\sigma$ provides stronger smoothing of the supervision signal.
Empirically, combining Gaussian smoothing with particle averaging broadens the effective range of guidance strengths, yielding positive suite-averaged gains across all tested scales (Section~\ref{subsec:ablation}).
We use the resulting particle-averaged adjoint $\widehat{\vlamb}_t$ as the training target for the guidance network.
\begin{wrapfigure}{r}{0.52\columnwidth}
    \centering
    \vspace{-1.0cm}
    \includegraphics[width=\linewidth]{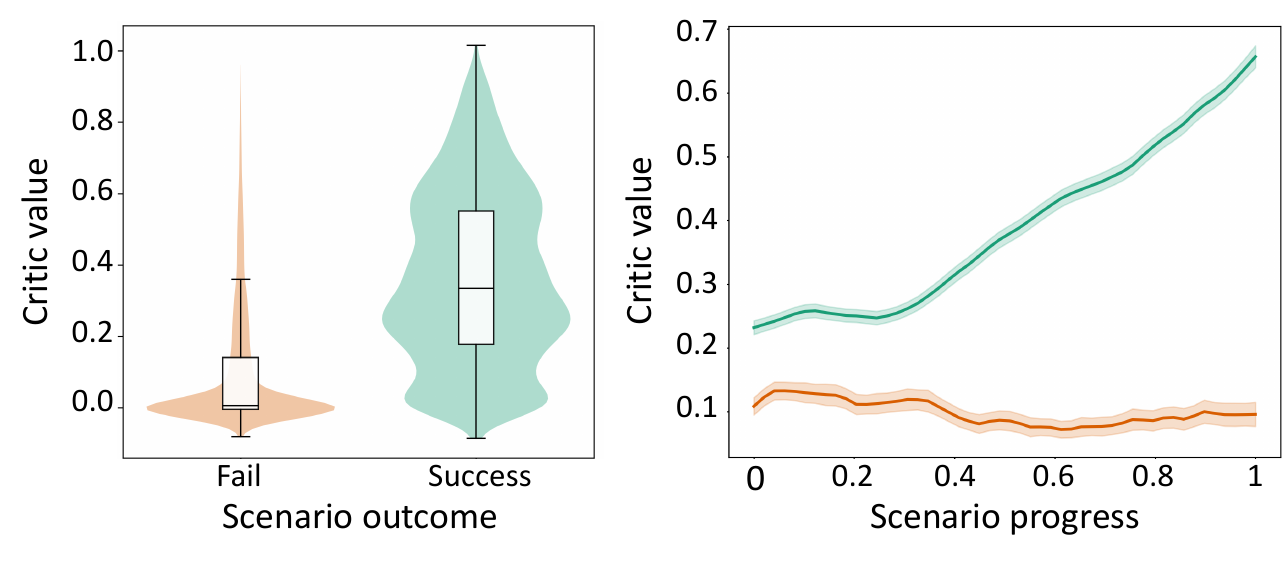}
    \vspace{-0.7cm}
    \caption{
    \textbf{Critic analysis.}
    (Left) Distribution of critic values by scenario outcome.
    (Right) Critic values over normalized scenario progress.
    Pooled across all 4 LIBERO suites (40 tasks, 1,200 episodes).
    }
    \label{fig:critic}
    \vspace{-0.5cm}
\end{wrapfigure}
\section{Experiments}
\label{sec:experiments}
We evaluate AGF on three flow-based VLAs: SmolVLA, $\pi_{0.5}$, and MolmoAct2, across LIBERO, RoboCasa, and LIBERO-Pro.
On the standard benchmarks, we evaluate SmolVLA~\citep{shukor1844smolvla} on all four LIBERO suites and $\pi_{0.5}$~\citep{intelligence2025pi_} on the atomic suite in RoboCasa.
We further evaluate MolmoAct2~\citep{fang2026molmoact2} on LIBERO-Pro, which covers the four LIBERO suites under five visual conditions, and in real-robot manipulation experiments.

\textbf{Baselines.}
We compare AGF with two inference-time critic-guidance methods QDPS and QGF\footnote{We consider QPILOTS and GAF as the same variant with QGF since they share the core update rule.}, critic-based action selection method called Q-BoN (Best-of-$N$), and policy fine-tuning method QAM. For the details on each baseline methods, please refer to the Appendix~\ref{app:baselines}.

\textbf{Implementation and evaluation.}
We evaluate SmolVLA~\citep{shukor1844smolvla} on all 10 tasks of each LIBERO suite~\citep{liu2023libero} and $\pi_{0.5}$~\citep{intelligence2025pi_} on 18 atomic RoboCasa tasks~\citep{nasiriany2024robocasa}, with 50 episodes per task and identical episode seeds for paired comparisons.
All inference-time guidance methods freeze the pretrained VLA and share the same task-specific critic, number of action-generation steps, and guidance-weight sweep range, with one weight selected per suite (Table~\ref{tab:guidance_weights}).
Following prior single-task flow-RL settings~\citep{doo2026qflow, zhou2026test}, we train a separate critic ensemble per task, and likewise a separate guidance network per task; the guidance network is trained on the same rollout data used for critic training, requiring no additional data collection. AGF denotes the particle-averaged variant ($M=4$, $\sigma=0.02$) and $M=1$ the pointwise-adjoint variant{.}
Training uses 5k rollout updates for SmolVLA and $\pi_{0.5}$ and 1k for MolmoAct2, at roughly 40 minutes per 1k updates on a single RTX 4090. QAM is trained under the same wall-clock budget. Remaining details are in Appendix~\ref{app:impl}, which also compares the QGF and AGF inference procedures (Algorithms~\ref{alg:qgf_inf}, \ref{alg:agf_inf}).

\subsection{Main Results}
\label{sec:main_results}

\textbf{Critic quality.}
We first validate whether the trained action-value function assigns meaningful values to each task. We sample actions from the policies used to train the critic and evaluate the rollout. At the end of each scenario, the rollout is labeled as either successful or failed.
Figure~\ref{fig:critic} shows that the trained critic assigns higher values to successful rollouts than to failed ones. 
For successful rollouts, the predicted value increases as the task progresses, while it remains low for failed cases. This outcome-dependent temporal behavior supports the critic as a guidance signal.

\begin{table*}[t]
\centering
\vspace{-0.4cm}
\caption{
\textbf{{Success rate (\%) on LIBERO (SmolVLA), RoboCasa ($\pi_{0.5}$), and LIBERO-Pro (MolmoAct2).}}
{Suite means over tasks, 50 episodes each; LIBERO-Pro is averaged over five visual conditions (Appendix~\ref{app:libero_pro}).}
Bold and underline mark the best and second-best inference-time method per row{; QAM fine-tunes the policy and is excluded from the ranking.}
}
\label{tab:eval}
\resizebox{0.9\textwidth}{!}{%
\begin{tabular}{cllcc|ccc>{\columncolor{oursbg}}c>{\columncolor{oursbg}}c}
\toprule
Benchmark / policy
& Suite
& Calibration
& Base
& QAM
& Q-BoN
& QDPS
& QGF
& \makecell{AGF \\(M=1)}
& \makecell{AGF \\(M=4)}\\
\midrule

\multirow{8}{*}{\makecell[c]{LIBERO / \\ SmolVLA}}
& \multirow{2}{*}{Goal}
& Suite-level
& \multirow{2}{*}{76.6}
& \multirow{2}{*}{83.6}
& 78.0 & 79.8 & \underline{80.6}
& 80.2 & \textbf{81.0} \\
&
& Task-level
&
&
& \underline{82.4} & 82.2 & 81.4
& \textbf{83.2} & \textbf{83.2} \\
\cmidrule(lr){2-10}

& \multirow{2}{*}{Object}
& Suite-level
& \multirow{2}{*}{89.6}
& \multirow{2}{*}{94.6}
& 90.6 & 89.4 & \textbf{93.6}
& 92.4 & \underline{93.4} \\
&
& Task-level
&
&
& 92.0 & 91.0 & \textbf{95.6}
& 95.0 & \underline{95.4} \\
\cmidrule(lr){2-10}

& \multirow{2}{*}{Spatial}
& Suite-level
& \multirow{2}{*}{71.6}
& \multirow{2}{*}{71.4}
& 71.2 & 68.0 & \underline{73.0}
& 72.2 & \textbf{74.6} \\
&
& Task-level
&
&
& 75.4 & 73.2 & \textbf{77.6}
& 74.8 & \underline{75.6} \\
\cmidrule(lr){2-10}

& \multirow{2}{*}{LIBERO-10}
& Suite-level
& \multirow{2}{*}{33.6}
& \multirow{2}{*}{41.0}
& 33.6 & \textbf{36.8} & 36.0
& \underline{36.2} & \textbf{36.8} \\
&
& Task-level
&
&
& 35.4 & 38.4 & 39.2
& \underline{39.6} & \textbf{42.0} \\
\midrule

\multirow{2}{*}{\makecell[c]{RoboCasa / $\pi_{0.5}$}}
& \multirow{2}{*}{Atomic}
& Suite-level
& \multirow{2}{*}{44.0}
& \multirow{2}{*}{45.8}
& 45.6 & 45.8 & \underline{46.3}
& 45.9 & \textbf{46.6} \\
&
& Task-level
&
&
& \underline{49.6} & \textbf{50.0} & \underline{49.6}
& 48.9 & 49.4 \\
\midrule

\multirow{8}{*}{\makecell[c]{LIBERO-Pro /\\ MolmoAct2}}
& \multirow{2}{*}{Goal}
& Suite-level
& \multirow{2}{*}{75.4}
& \multirow{2}{*}{53.8}
& 10.4 & \underline{76.2} & 76.0
& 76.0 & \textbf{76.4} \\
&
& Task-level
&
&
& 10.4 & \underline{77.6} & 77.4
& 77.4 & \textbf{77.8} \\
\cmidrule(lr){2-10}
& \multirow{2}{*}{Object}
& Suite-level
& \multirow{2}{*}{83.0}
& \multirow{2}{*}{59.6}
& 1.0 & \underline{84.0} & \textbf{84.6}
& \textbf{84.6} & 83.8 \\
&
& Task-level
&
&
& 1.0 & \underline{85.0} & \textbf{85.8}
& 84.8 & 84.6 \\
\cmidrule(lr){2-10}
& \multirow{2}{*}{Spatial}
& Suite-level
& \multirow{2}{*}{75.8}
& \multirow{2}{*}{58.4}
& 5.8 & \underline{76.8} & 76.0
& 76.0 & \textbf{77.4} \\
&
& Task-level
&
&
& 5.8 & \underline{78.6} & 77.0
& 78.0 & \textbf{79.0} \\
\cmidrule(lr){2-10}
& \multirow{2}{*}{LIBERO-10}
& Suite-level
& \multirow{2}{*}{65.8}
& \multirow{2}{*}{53.2}
& 0.2 & 66.8 & \textbf{67.8}
& \underline{67.2} & 67.0 \\
&
& Task-level
&
&
& 0.2 & 68.4 & \textbf{69.4}
& 68.6 & \underline{68.8} \\
\bottomrule
\end{tabular}%
}
\vspace{-0.5cm}
\end{table*}

\textbf{Task performance.}
Table~\ref{tab:eval} compares AGF with inference-time critic-guidance, action-selection, and policy-fine-tuning baselines on LIBERO, RoboCasa, and LIBERO-Pro.
For inference-time methods, we report suite-level calibration, which shares one guidance strength per suite, and task-level calibration, which selects the best strength per task from the sweep and thus serves as a per-task upper bound; strength transfer to genuinely held-out tasks is evaluated by the LOTO protocol below.
Under suite-level calibration, AGF improves the pretrained VLA by $2.6$--$4.4$ percentage points across the five suites, outperforming Q-BoN on all five and QGF on four while matching or exceeding QDPS throughout.
With task-level calibration, its gains further increase to $4.0$--$8.4$ points. Compared with QAM, AGF achieves competitive performance without modifying the pretrained policy and retains an adjustable guidance strength at inference.
The same trend holds on LIBERO-Pro with the larger MolmoAct2 backbone. Per-condition results are provided in Appendix~\ref{app:libero_pro}.
QAM, fine-tuned under the same wall-clock budget, falls substantially below the pretrained policy on this 5.5B backbone, which is consistent with its unfavorable training scaling (Figure~\ref{fig:efficiency}a), indicating that fine-tuning at this scale requires stabilization beyond matched compute.
Q-BoN is a notable exception, collapsing on MolmoAct2. We attribute this to best-of-$N$ over-optimization~\citep{gao2023scaling}: although all actions are sampled from the base policy, the critic is reliable only in regions sufficiently covered by its finite rollout data (Appendix~\ref{app:qbon}).

\textbf{Statistical significance.}
Paired tests over identical episode seeds support these comparisons: AGF significantly improves the pretrained policy ($+3.6$pp pooled over 40 LIBERO tasks, McNemar $p<10^{-3}$) and matches QGF ($+0.7$pp in AGF's favor, $p=0.54$) at a fraction of its inference cost, with the same pattern on LIBERO-Pro, and directionally, on RoboCasa. Full tests are in Appendix~\ref{app:stats}.

\textbf{Calibration transfer.}
QGF attains the strongest task-level results on the Object and Spatial suites with SmolVLA, but this ordering reverses under suite-level calibration, reflecting AGF's broad effective scale range (Figure~\ref{fig:weight}).
Leave-one-task-out (LOTO) calibration tests this directly: the strength is selected on all but one task and evaluated on the held-out task. AGF outperforms QGF under LOTO on all four suites of LIBERO and atomic tasks of RoboCasa; on four of them it selects the same strength regardless of the held-out task, so its LOTO performance coincides with suite-level calibration. Details are in Appendix~\ref{app:loto}.

\begin{figure}[t]
    \centering
    \includegraphics[width=\linewidth]{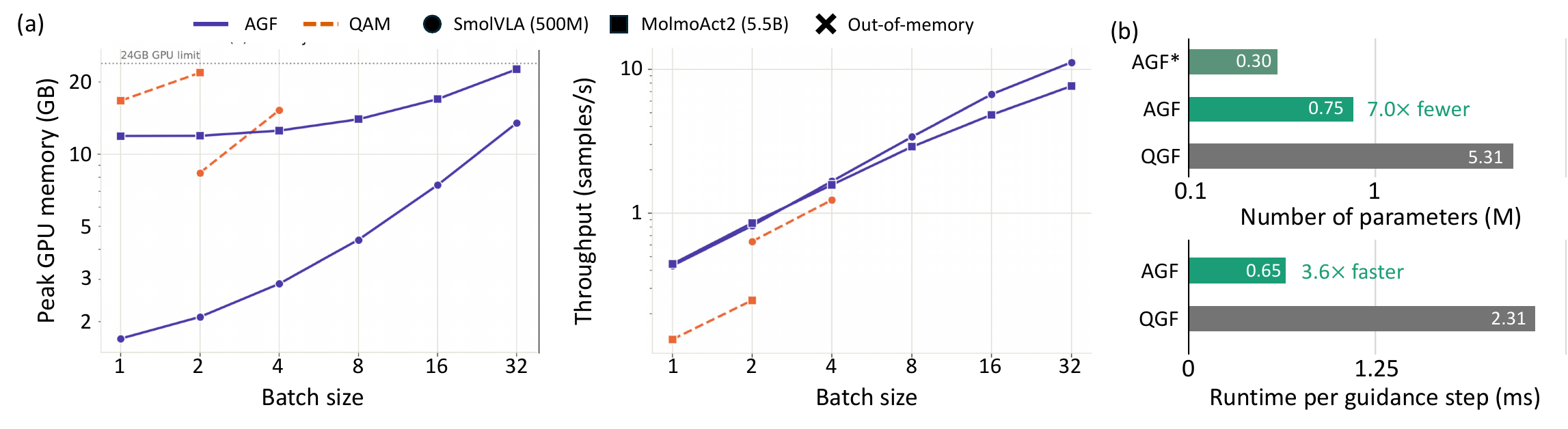}
    \vspace{-0.4cm}
    \caption{\textbf{Training and inference efficiency.}
    {(a) Peak training memory and throughput of AGF versus QAM as the batch size grows, on one RTX 4090. (b) Parameters and runtime per guidance step of AGF versus vectorized QGF (SmolVLA). AGF$^\star$: trainable part of $g_\phi$ only.}}
    \label{fig:efficiency}
    \vspace{-0.4cm}
\end{figure}

\subsection{Training and inference efficiency}
\label{subsec:efficiency}
The results above show that AGF matches or outperforms inference-time critic-guidance methods {and remains competitive with} policy fine-tuning. 
AGF reaches this performance at a fraction of their cost: its lightweight guidance design makes both training and inference substantially cheaper than baselines.
Specifically, AGF optimizes only the guidance network while keeping the pretrained VLA frozen. 
As a result, it maintains no parameter gradients or optimizer states for the VLA, substantially reducing the training cost.
Figure~\ref{fig:efficiency}(a) compares the peak GPU memory and training throughput of AGF and QAM as the batch size increases. 
Across VLA backbones, AGF scales to substantially larger batch sizes under the same memory budget. Its throughput also continues to increase with the batch size. 
\begin{wrapfigure}{r}{0.4\columnwidth}
    \centering
    \vspace{-0.4cm}
    \includegraphics[width=\linewidth]{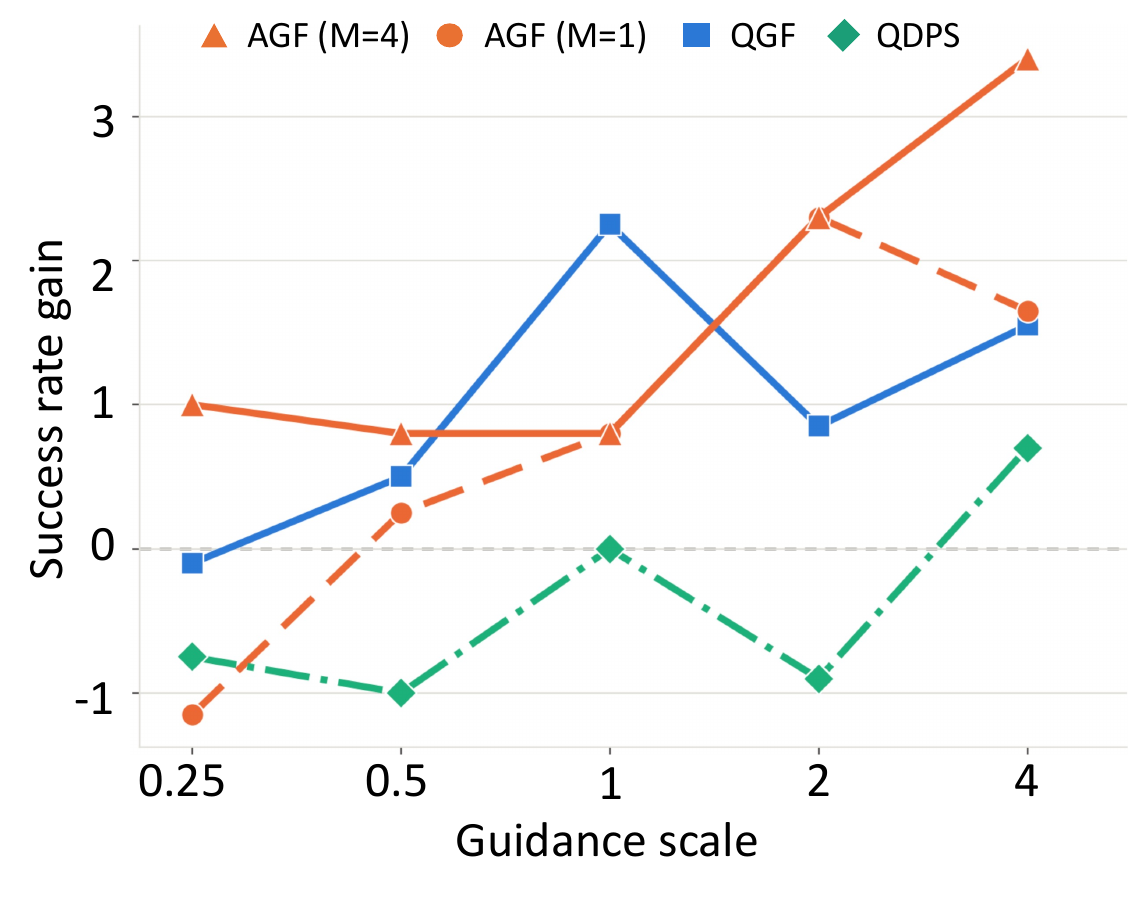}
    \caption{
    \textbf{Weight ablation.}
    Guidance-scale sweep ($0.25$--$4.0$) for each inference-time method, averaged over all LIBERO suites; AGF consistently improves performance across all scales.
    }
    \label{fig:weight}
    \vspace{-0.3cm}
\end{wrapfigure}
However, the scaling of QAM is limited earlier. This difference becomes bigger for the larger VLA backbone. 
Even under parameter-efficient fine-tuning, QAM must back-propagate through both the flow policy and the VLM, storing intermediate activations for the full backbone, whereas AGF only requires action-space vector--Jacobian products of the action expert, never entering the VLM backbone.

At inference time, QDPS and QGF evaluate and back-propagate through the critic ensemble at every flow step, whereas AGF requires only a forward pass through the guidance network. As shown in Figure~\ref{fig:efficiency}(b), AGF runs $3.6\times$ faster per guidance step than vectorized QGF and accesses $7.0\times$ fewer {parameters}; end-to-end per-chunk overhead measured inside a real control loop is reported in Section~{\ref{sec:real_robot}}. 
Thus, AGF transfers the expensive critic-gradient computation to a lightweight guidance-network training stage, enabling guidance at deployment {without the critic ensemble}.

\subsection{Effective range of guidance strength}
\label{subsec:ablation}
We now examine the effect of the guidance strength, sweeping it from $0.25$ to $4.0$ for both calibration settings, relative to the magnitude of the flow velocity predicted by the pretrained VLA. 
Figure~\ref{fig:weight} shows the average success-rate gain over the pretrained policy across all LIBERO suites; model parameters are fixed within each method, and only the inference-time guidance strength is varied.
QDPS shows no consistent gain and can degrade performance, and QGF peaks at an intermediate scale but fluctuates across the range, whereas AGF (M$=$4) attains positive gains throughout the range, performing best at both the smallest and largest scales.

\begin{figure}
    \centering
    \includegraphics[width=\linewidth]{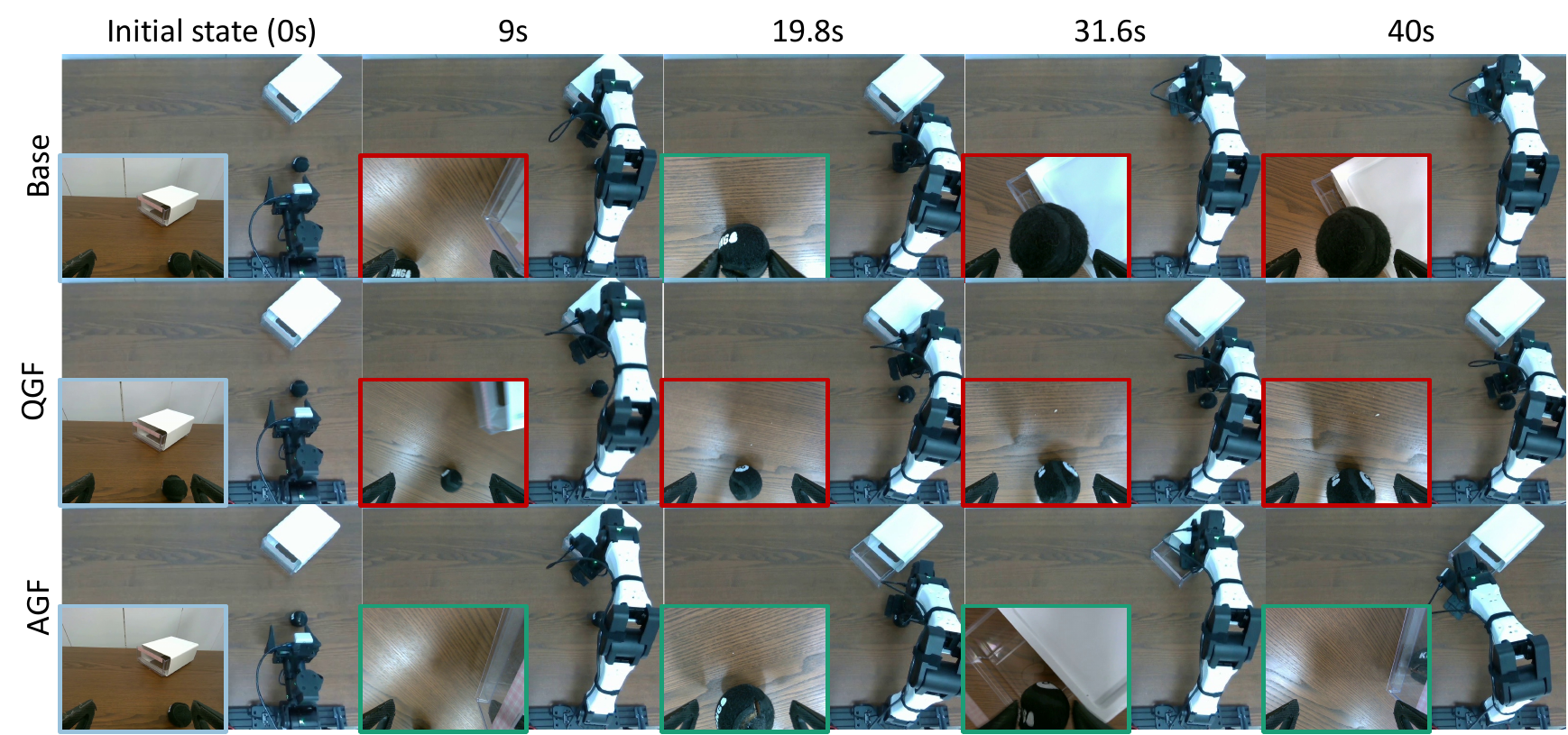}
    \vspace{-0.4cm}
    \caption{Real-robot evaluation on \textit{open-pnp-close}. Columns show top-view observations at identical timestamps across methods, each chosen where AGF completes a subtask. Insets show the right wrist camera; green and red borders mark subtask success and failure.}
    \label{fig:real_qual}
    \vspace{-0.2cm}
\end{figure}

\subsection{Real Robot Experiment}
\label{sec:real_robot}

We verify the proposed AGF pipeline, including critic training, guidance network training, and deployment {without the critic ensemble}, on the real hardware setup. In particular, we examine whether AGF improves the base VLA effectively without carrying the critic ensemble and backpropagating it at deployment.

\textbf{Setup.} We adopt the real-robot setup of MolmoAct2~\citep{fang2026molmoact2}, a YAM 6-DoF arm with a parallel-jaw gripper and two RGB cameras, and evaluate on two manipulation tasks, one short (``Pick up the ball and place it on the plate.") and one long task (``Open the box, put the ball inside, and close it"). Critics and guidance networks are trained on robot rollouts following the same procedure as in simulation. 
Guidance strengths are likewise transferred from simulation without on-robot tuning: each method uses the strength that maximizes its average gain in the simulation sweep (Figure~\ref{fig:weight}; $w=1$ for QGF, $w=4$ for AGF). 
We compare Base, QGF, and AGF over 50 paired episodes for a short task, and 25 paired episodes for a long task; the full protocol is in Appendix~\ref{app:real}.

\begin{wraptable}{r}{0.55\linewidth}
\centering
\vspace{-0.4cm}
\caption{
\textbf{Real robot results.}
Success rate (\%) over paired episodes; parentheses count discordant pairs against Base (F$\rightarrow$S, S$\rightarrow$F), on which the pooled McNemar $p$ is computed. Overhead is per action chunk.
}
\label{tab:real_robot}
\scriptsize
\setlength{\tabcolsep}{3.5pt}
\resizebox{\linewidth}{!}{
\begin{tabular}{lccc}
\toprule
& Base & QGF & AGF \\
\midrule
\textit{pnp-plate} ($n{=}50$) & 78.0 & 78.0 (4/4) & \textbf{88.0} (7/2) \\
\textit{open-pnp-close} ($n{=}25$) & 44.0 & 48.0 (5/4) & \textbf{76.0} (10/2) \\
Pooled ($n{=}75$) & 66.7 & 68.0 (9/8) & \textbf{84.0} (17/4) \\
McNemar $p$ (pooled) & -- & 1.000 & \textbf{0.007} \\
\midrule
Overhead (ms/chunk) & -- & 31.2 & \textbf{11.2} \\
\bottomrule
\end{tabular}
}
\vspace{-7pt}
\end{wraptable}
\textbf{Results.} Table~\ref{tab:real_robot} reports success rates and per-chunk guidance overhead measured during the rollouts on the same RTX 4090.
Both methods are deployed under the same no-tuning protocol: each transfers its simulation-selected strength without any on-robot calibration.
Under this protocol, AGF significantly improves the base VLA without carrying the critic ensemble at deployment, whereas QGF's transferred strength yields no measurable gain, consistent with its fluctuating response to the guidance strength in simulation (Figure~\ref{fig:weight}).
The paired gap between the two is itself significant ($+16.0$pp in AGF's favor, $p{=}0.004$): live critic guidance would require on-robot strength calibration to be effective, which is precisely the per-deployment cost that inference-time guidance is meant to avoid, whereas AGF transfers directly.
The guidance overhead of AGF is also modest ($1.12$ vs.\ $3.12$ms per flow step for vectorized QGF) relative to the ${\sim}640$ms chunk generation.
{In a representative \textit{open-pnp-close} rollout (Figure~\ref{fig:real_qual}), only AGF completes all three stages, opening the box, placing the ball inside, and closing it, whereas Base and QGF fail to complete the task. Additional rollouts are provided in Appendix~\ref{app:robot_qual}.}

\section{Conclusion}
We propose AGF, {which formulates critic guidance for a frozen flow policy as deterministic optimal control and regresses a lightweight network onto the resulting costate, the exact value gradient of the executed action.}
{On LIBERO, RoboCasa, and LIBERO-Pro, AGF improves every pretrained VLA, matches live critic guidance and policy fine-tuning, keeps its gains under a single deployed strength, and carries a simulation-chosen strength to a real robot, while removing the critic ensemble and all back-propagation from the control loop.} We discuss limitations and future directions in Appendix~\ref{app:limitations}.

\section*{Use of Large Language Models}
We used a large language model as a writing and statistical analysis aid during the preparation of this paper.
Specifically, it was used for sentence-level editing and grammar checking of author-written text, and assisting the statistical analysis of experimental results, such as the paired significance tests reported in Appendix~\ref{app:stats}. 
Other aspects, including research ideas, methodologies, experimental design and experiments, and scientific claims are made by the authors. LLM-assisted text and analyses were reviewed and verified by the authors.

\section*{Ethics Statement}

This work does not involve human subjects or private data.
Real robot experiments were conducted by the authors on a tabletop manipulation platform handling benign objects such as a ball, a plate, and a box. 
Demonstration data were collected by the authors via teleoperation and contains no personally indentifiable information.
All simulation benchmarks and pretrained checkpoints are publicly available and used under their respective licenses.

\section*{Reproducibility Statement}
We provide the complete derivations of Proposition~\ref{prop:optimal} and Lemma~\ref{prop:particle} in
Appendix~\ref{app:prop1} and Appendix~\ref{app:prop4}. Appendix~\ref{app:impl} specifies everything needed to reproduce our pipeline, from the exact pretrained checkpoints with their Hugging Face identifiers (\ref{app:backbone}), critic and guidance-network architectures, training objectives, hyperparameters, and per-setting action-chunking configurations (\ref{app:simul}, Table~\ref{tab:horizon}), baseline implementations (\ref{app:baselines}), and the full real-robot protocol—dataset statistics, fine-tuning configuration, asynchronous execution, critic-input padding, and the paired evaluation procedure (\ref{app:real}). All comparisons use identical episode seeds; the statistical procedures and selected guidance weights are detailed in Appendix~\ref{app:stats} and Appendix \ref{app:weights}. The code with trained critics and guidance networks will be publicly available upon publication.

\bibliography{iclr2027_conference}
\bibliographystyle{iclr2027_conference}

\newpage
\appendix

\section{Proofs}
\subsection{Optimal control}
\label{app:prop1}

\optimalcontrol*
\begin{proof}
For the derivation, we first consider the equivalent forward generation time $\tau\in[0,1]$, where $\tau=0$ corresponds to noise and $\tau=1$ to the clean action.
Let the controlled ODE be
\begin{equation}
    \frac{d\va_\tau}{d\tau} = \vf_\theta(\va_\tau, \vs, \tau) + \vu_\tau,
\end{equation}
with the objective
\begin{equation}
    \max_{\vu}  Q^\pi(\vs,\va_1) - \int_0^1 \frac{1}{2\beta_\tau} \|\vu_\tau\|_2^2 d\tau.
\end{equation}
The corresponding Hamiltonian is
\begin{equation}
    \mathcal{H} = \vlamb_\tau^\top \left( \vf_\theta(\va_\tau, \vs, \tau) + \vu_\tau \right) - \frac{1}{2\beta_\tau} \|\vu_\tau\|_2^2.
\end{equation}
By Pontryagin's maximum principle{~\citep{pontryagin1987mathematical}}, the optimal control maximizes the Hamiltonian pointwise. Hence,
\begin{equation}
    \frac{\partial \mathcal{H}} {\partial \vu_\tau} = \vlamb_\tau - \frac{1}{\beta_\tau}\vu_\tau = 0,
\end{equation}
which gives
\begin{equation}
    \vu_\tau^\star = \beta_\tau \vlamb_\tau.
\end{equation}
The costate follows
\begin{equation}
    \frac{d\vlamb_\tau}{d\tau}= -\frac{\partial\mathcal{H}}{\partial\va_\tau}=-\left(\frac{\partial\vf_\theta(\va_\tau, \vs, \tau)}{\partial\va_\tau}\right)^\top \vlamb_\tau,
\end{equation}
with the terminal condition $\vlamb_1 = \nabla_{\va_1}Q^\pi(\vs,\va_1)$.


Our flow convention instead uses $t = 1$ for noise and $t = 0$ for the clean action. Under $t = 1-\tau$, the drift and control map to $\vf_\theta = -\vv_\theta$ and $\vu_\tau = -\vu_t$, and the two sign flips in the costate dynamics cancel, giving
\begin{equation}
    \vu_t^\star=-\beta_t\vlamb_t, \qquad \frac{d\vlamb_t}{dt}= -\left(\frac{\partial\vv_\theta(\va_t, \vs, t)}{\partial\va_t}\right)^\top \vlamb_t,
\end{equation}
with $\vlamb_0=\nabla_{\va_0}Q^\pi(\vs,\va_0)$, which proves Proposition~\ref{prop:optimal}.
\end{proof}

{\subsection{Particle-averaged adjoint}\label{app:prop4}
\begin{lemma}\label{prop:particle}
Fix a controlled trajectory $\{\va_{t_j}\}_{j=0}^{n}$ with $t_0=0$ and any $n\in\{1,\dots,N\}$, and let $\widehat{\vlamb}$ follows~\eqref{eqn:particle_adjoint}, where the particle sets $\{\veps_m^{(j)}\}_{m=1}^{M}$ are drawn i.i.d.\ from $\mathcal{N}(\bm{0},\rmI)$, independently across flow steps $j$ and of $(\vs,\va_1)$. Let $\bar J_{t_j}=\mathbb{E}_{\veps}[J_{t_j}(\va_{t_j}+\sigma\veps)]$ and assume $\mathbb{E}_{\veps}\|J_{t_j}(\va_{t_j}+\sigma\veps)-\bar J_{t_j}\|_F^2\le c$ for all $j$. Then, for every $M\ge1$,
\begin{equation}
    \mathbb{E}\big[\widehat{\vlamb}_{t_n}\big]=\bar{\vlamb}_{t_n}:=\big(\rmI+\Delta t\bar J_{t_n}\big)^{\top}\cdots\big(\rmI+\Delta t\bar J_{t_1}\big)^{\top}\nabla_{\va_0}Q^\pi(\vs,\va_0),
\label{eqn:particle_mean}
\end{equation}
\begin{equation}
    \mathbb{E}\big\|\widehat{\vlamb}_{t_n}-\bar{\vlamb}_{t_n}\big\|_2^2\le\frac{C_n}{M},
\label{eqn:particle_var}
\end{equation}
where $C_n$ does not depend on $M$. Consequently, with the rollout guidance fixed, $t$ uniform on the grid, and a finite loss, the minimizer of~\eqref{eqn:agf_loss} with target $\widehat{\vlamb}$ is $\bar{\vlamb}$ a.e.\ for every $M$.
\end{lemma}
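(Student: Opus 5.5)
The plan is to write $\widehat{\vlamb}_{t_n}$ as an ordered product of independent random matrices applied to a fixed vector, and use independence across steps for both the mean and the variance. Conditioning on the fixed trajectory $\{\va_{t_j}\}$ (hence on $\vs,\va_1$ and $\nabla_{\va_0}Q^\pi(\vs,\va_0)$), set $A_j:=(\rmI+\Delta t\widehat J_{t_j})^\top$ and $\bar A_j:=(\rmI+\Delta t\bar J_{t_j})^\top$. Then $\widehat{\vlamb}_{t_n}=A_n\cdots A_1\vlamb_0$. Since the particle sets are i.i.d. and independent across $j$ and of $(\vs,\va_1)$, the $A_j$ are mutually independent given the trajectory, with $\mathbb{E}[A_j]=\bar A_j$ by linearity of the Monte Carlo average \eqref{eqn:particle_jacobian}. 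The expectation of a product of independent matrices factorizes, which gives \eqref{eqn:particle_mean} for every $M$. Equivalently, one can induct on $n$: $\mathbb{E}[\widehat{\vlamb}_{t_n}\mid \widehat{\vlamb}_{t_{n-1}}]=\bar A_n\widehat{\vlamb}_{t_{n-1}}$, then apply the tower property.

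For \eqref{eqn:particle_var} I would also induct on $n$, writing $e_n:=\widehat{\vlamb}_{t_n}-\bar{\vlamb}_{t_n}$. Decompose
\begin{equation*}
e_n=(A_n-\bar A_n)\widehat{\vlamb}_{t_{n-1}}+\bar A_n e_{n-1}.
\end{equation*}
The cross term has zero mean: $A_n-\bar A_n$ is independent of $(\widehat{\vlamb}_{t_{n-1}},e_{n-1})$ and has mean zero. Hence
\begin{equation*}
\mathbb{E}\|e_n\|^2=\mathbb{E}\|(A_n-\bar A_n)\widehat{\vlamb}_{t_{n-1}}\|^2+\mathbb{E}\|\bar A_n e_{n-1}\|^2 .
\end{equation*}
The first term is at most $\mathbb{E}\|A_n-\bar A_n\|_F^2\,\mathbb{E}\|\widehat{\vlamb}_{t_{n-1}}\|^2$, again by independence, and $\mathbb{E}\|A_n-\bar A_n\|_F^2=\Delta t^2\,\mathbb{E}\|\widehat J_{t_n}-\bar J_{t_n}\|_F^2\le\Delta t^2 c/M$ because the $M$ summands are i.i.d. The second term is at most $\|\bar A_n\|_2^2\,\mathbb{E}\|e_{n-1}\|^2$.

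The main obstacle is bounding $\mathbb{E}\|\widehat{\vlamb}_{t_{n-1}}\|^2$ uniformly in $M$. I would write it as $\|\bar{\vlamb}_{t_{n-1}}\|^2+\mathbb{E}\|e_{n-1}\|^2\le\|\bar{\vlamb}_{t_{n-1}}\|^2+C_{n-1}/M\le\|\bar{\vlamb}_{t_{n-1}}\|^2+C_{n-1}$ for $M\ge1$. This is finite because the trajectory is fixed and each $\bar J$ is finite; the latter holds because the Frobenius second-moment assumption forces $\mathbb{E}\|J\|_F<\infty$. The recursion then reads
\begin{equation*}
C_n=\Delta t^2 c\,(\|\bar{\vlamb}_{t_{n-1}}\|^2+C_{n-1})+\|\bar A_n\|_2^2C_{n-1},\qquad C_0=0,
\end{equation*}
and this $C_n$ does not involve $M$.

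For the final claim, fix the rollout guidance, so the trajectory distribution does not depend on the particles. Then, conditionally on $(\vs,\va_1,t)$ and hence on $\va_t$, the squared loss decomposes as
\begin{equation*}
\|g_\phi-\widehat{\vlamb}_t\|^2=\|g_\phi-\bar{\vlamb}_t\|^2+\|\widehat{\vlamb}_t-\bar{\vlamb}_t\|^2-2(g_\phi-\bar{\vlamb}_t)^\top(\widehat{\vlamb}_t-\bar{\vlamb}_t).
\end{equation*}
The cross term vanishes in expectation by \eqref{eqn:particle_mean}, and the middle term is finite by \eqref{eqn:particle_var} and independent of $\phi$. So \eqref{eqn:agf_loss} equals $\mathbb{E}\|g_\phi-\bar{\vlamb}_t\|^2$ plus a constant, and over measurable functions it is minimized exactly when $g=\bar{\vlamb}$ almost everywhere. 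I would note that $\bar{\vlamb}$ is a function of $(\vs,\va_1,t)$; it can be written as a function of $(\vs,\va_t,t)$ because the deterministic flow map is invertible. Finiteness of the loss justifies the expansion.
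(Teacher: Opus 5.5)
Your mean argument is the paper's: independence of the per-step averaged Jacobians lets the expectation of the ordered product factor. For the variance you take a different route. The paper expands $\widehat{\vlamb}_{t_n}-\bar{\vlamb}_{t_n}$ over all nonempty subsets $\mathcal{P}\subseteq\{1,\dots,n\}$ of steps that contribute $\Delta t\,\mE_j$. It shows these $2^n-1$ terms are pairwise orthogonal in expectation: two distinct subsets differ in some $j$, and their inner product is linear in the zero-mean, independent $\mE_j$. It then bounds each term by submultiplicativity and sums with the binomial theorem, giving the closed form $C_n=n\ell^2\Delta t^2c(\rho^2+\Delta t^2c)^{n-1}$, with $\rho=\max_j\|\bar A_j\|_2$ and $\ell=\|\vlamb_0\|_2$.

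Your one-step decomposition $e_n=(A_n-\bar A_n)\widehat{\vlamb}_{t_{n-1}}+\bar A_ne_{n-1}$ is a martingale-difference induction. It needs only one orthogonality relation per step and no combinatorial bookkeeping. The cost is an auxiliary second-moment bound on $\widehat{\vlamb}_{t_{n-1}}$, which you obtain correctly from unbiasedness. Nothing is lost in sharpness. Keep the factor $1/M$ on $\mathbb{E}\|e_{n-1}\|_2^2$ instead of bounding it by $C_{n-1}$, and use $\|\bar{\vlamb}_{t_{n-1}}\|_2\le\rho^{n-1}\ell$. Your recursion then reads $\mathbb{E}\|e_n\|_2^2\le\eta\rho^{2(n-1)}\ell^2+(\rho^2+\eta)\,\mathbb{E}\|e_{n-1}\|_2^2$ with $\eta=\Delta t^2c/M$. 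This unrolls to exactly the paper's intermediate bound $\ell^2[(\rho^2+\eta)^n-\rho^{2n}]$.

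One justification should be replaced. In the minimizer step you argue that $\bar{\vlamb}_t$, a priori a function of $(\vs,\va_1,t)$, is a function of $(\vs,\va_t,t)$ because the flow map is invertible. The rollout is the discrete Euler scheme \eqref{eqn:euler} with a neural velocity and a neural guidance network. The step $\va\mapsto\va+\Delta t[\vv_\theta(\va,\vs,t)-\beta_t g_\phi(\vs,\va,t)]$ need not be injective, so $\va_1$ cannot in general be recovered from $\va_t$. You do not need this. The quantity $\bar{\vlamb}_{t_n}$ depends only on $\vs$ and on the states $\va_{t_n},\dots,\va_{t_0}$ between $\va_{t_n}$ and the clean action. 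Those states are computed from $(\vs,\va_{t_n})$ by running the fixed-guidance Euler steps toward $t=0$. This is the paper's remark that the input fixes the trajectory from $\va_t$ to $\va_0$. With it you can condition directly on $(\vs,\va_t,t)$, and independence of the particles gives $\mathbb{E}[\widehat{\vlamb}_t\mid\vs,\va_t,t]=\bar{\vlamb}_t$. The rest of your bias--variance argument then goes through unchanged.
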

\begin{proof}
Let $\vlamb_0=\nabla_{\va_0}Q^\pi(\vs,\va_0)$ and
\begin{equation}
    \mA_j=\rmI+\Delta t\widehat{J}_{t_j}(\va_{t_j}),\qquad
    \bar{\mA}_j=\rmI+\Delta t\bar J_{t_j},\qquad
    \mE_j=\widehat{J}_{t_j}(\va_{t_j})-\bar J_{t_j}.
\end{equation}
Unrolling \eqref{eqn:particle_adjoint} from $\widehat{\vlamb}_{t_0}=\vlamb_0$:
\begin{equation}
    \widehat{\vlamb}_{t_n}=\mA_n^\top\mA_{n-1}^\top\cdots\mA_1^\top\vlamb_0,\qquad \mA_j=\bar{\mA}_j+\Delta t\mE_j .
\label{eqn:particle_unroll}
\end{equation}
The particles enter only $\widehat{J}$, not the forward pass, so $\va_{t_j}$ and $\vlamb_0$ are fixed, $\mA_1,\dots,\mA_n$ are independent, and by~\eqref{eqn:particle_jacobian}:
\begin{equation}
    \mathbb{E}[\mE_j]=\frac{1}{M}\sum_{m=1}^{M}\mathbb{E}_{\veps}\big[J_{t_j}(\va_{t_j}+\sigma\veps_m^{(j)})\big]-\bar J_{t_j}=\bm{0},\qquad \mathbb{E}[\mA_j]=\bar{\mA}_j .
\end{equation}

\textbf{Mean.} Taking expectations in~\eqref{eqn:particle_unroll}:
\begin{align*}
    \mathbb{E}\big[\widehat{\vlamb}_{t_n}\big]
    &=\mathbb{E}[\mA_n]^\top\cdots\mathbb{E}[\mA_1]^\top\vlamb_0\quad\text{(independence)}\\
    &=\bar{\mA}_n^\top\cdots\bar{\mA}_1^\top\vlamb_0=\bar{\vlamb}_{t_n}.
\end{align*}

\textbf{Variance.} Each $\mE_j$ is an average of $M$ i.i.d.\ zero-mean matrices:
\begin{align*}
    \mathbb{E}\|\mE_j\|_2^2
    &\le\mathbb{E}\|\mE_j\|_F^2\quad(\|\cdot\|_2\le\|\cdot\|_F)\\
    &=\frac{1}{M}\mathbb{E}_{\veps}\big\|J_{t_j}(\va_{t_j}+\sigma\veps)-\bar J_{t_j}\big\|_F^2\le\frac{c}{M}\quad\text{(i.i.d., zero mean)}.
\end{align*}
Expanding~\eqref{eqn:particle_unroll} by the set $\mathcal{P}\subseteq\{1,\dots,n\}$ of factors that contribute $\Delta t\mE_j$:
\begin{equation}
    \widehat{\vlamb}_{t_n}-\bar{\vlamb}_{t_n}=\sum_{\mathcal{P}\neq\emptyset}\vd_{\mathcal{P}},\qquad
    \vd_{\mathcal{P}}=(\mB^{\mathcal{P}}_n)^\top\cdots(\mB^{\mathcal{P}}_1)^\top\vlamb_0,\qquad
    \mB^{\mathcal{P}}_j=\begin{cases}\Delta t\mE_j, & j\in \mathcal{P},\\ \bar{\mA}_j, & j\notin \mathcal{P}.\end{cases}
\end{equation}
For $\mathcal{P}\neq\mathcal{P}'$, pick $j$ in exactly one of them; then $\vd_{\mathcal{P}}^\top\vd_{\mathcal{P}'}$ is linear in $\mE_j$:
\begin{equation}
    \mathbb{E}\big[\vd_{\mathcal{P}}^\top\vd_{\mathcal{P}'}\big]=\mathbb{E}\Big[\mathbb{E}\big[\vd_{\mathcal{P}}^\top\vd_{\mathcal{P}'}\big|\{\mE_i\}_{i\neq j}\big]\Big]=0 .
\end{equation}
With $\rho=\max_j\|\bar{\mA}_j\|_2$, $\ell=\|\vlamb_0\|_2$, and $\eta=\Delta t^2c/M$:
\begin{align*}
    \mathbb{E}\big\|\widehat{\vlamb}_{t_n}-\bar{\vlamb}_{t_n}\big\|_2^2
    &=\textstyle\sum_{\mathcal{P}\neq\emptyset}\mathbb{E}\|\vd_{\mathcal{P}}\|_2^2\quad\text{(cross terms vanish)}\\
    &\le\ell^2\textstyle\sum_{\mathcal{P}\neq\emptyset}\prod_{j\notin \mathcal{P}}\|\bar{\mA}_j\|_2^2\prod_{j\in \mathcal{P}}\Delta t^2\mathbb{E}\|\mE_j\|_2^2\quad\text{(submultiplicativity, indep.)}\\
    &\le\ell^2\textstyle\sum_{\mathcal{P}\neq\emptyset}\rho^{2(n-|\mathcal{P}|)}\eta^{|\mathcal{P}|}=\ell^2\big[(\rho^2+\eta)^n-\rho^{2n}\big]\quad\text{(binomial theorem)}\\
    &\le n\ell^2\eta(\rho^2+\eta)^{n-1}\quad\text{(mean value theorem)}\\
    &\le\frac{C_n}{M}\quad(M\ge1),
\end{align*}
with $C_n=n\ell^2\Delta t^2c(\rho^2+\Delta t^2c)^{n-1}$.

\smallskip
\textbf{Minimizer.} For a fixed input $(\vs,\va_t,t)$ and any output $\vz$:
\begin{equation}
    \mathbb{E}\big[\|\vz-\widehat{\vlamb}_t\|_2^2\big|\vs,\va_t,t\big]=\big\|\vz-\mathbb{E}[\widehat{\vlamb}_t\mid\vs,\va_t,t]\big\|_2^2+\mathrm{const}.
\end{equation}
The input fixes the trajectory from $\va_t$ to $\va_0$, and the particles are independent of it, so by~\eqref{eqn:particle_mean}:
\begin{equation}
    g_\phi^\star(\vs,\va_t,t)=\mathbb{E}\big[\widehat{\vlamb}_t\mid\vs,\va_t,t\big]=\bar{\vlamb}_t\quad\text{for every } M.
\end{equation}
\end{proof}

\begin{remark}
$\bar{\vlamb}$ propagates the smoothed Jacobian along the unsmoothed trajectory, so it differs from $\vlamb$ by a smoothing bias that vanishes as $\sigma\to0$ under suitable continuity and integrability conditions. Independent particle draws across flow steps ensure unbiasedness; reusing particles across steps can introduce bias. With rollout inputs held fixed and stop-gradient applied to the targets, the squared-loss gradient is affine in its target. Thus, the stochastic regression gradient is unbiased relative to regression against $\bar{\vlamb}$, for any $g_\phi$ and any fixed terminal vector, including the ensemble critic gradient.
\end{remark}
}

\section{Implementation Details}
\label{app:impl}

\subsection{VLA Backbones}
\label{app:backbone}
All experiments build on publicly released checkpoints from the Hugging Face
hub, each fine-tuned on the demonstrations of the corresponding benchmark. We
use all backbones frozen; AGF trains only the critic and the guidance network
on top.

\textbf{SmolVLA}~\citep{shukor1844smolvla}, trained on LIBERO (\texttt{HuggingFaceVLA/smolvla\_libero}), used for LIBERO.

$\mathbf{\pi_{0.5}}$~\citep{intelligence2025pi_}, trained on RoboCasa (\texttt{lerobot/pi05\_robocasa}), used for the RoboCasa atomic tasks.

\textbf{MolmoAct2}~\citep{fang2026molmoact2}, trained on LIBERO; we use the LeRobot-format conversion (\texttt{allenai/MolmoAct2-LIBERO-LeRobot}) for LIBERO-Pro. For the real-robot experiments (Section~4.4), we instead start from the bimanual-YAM checkpoint (\texttt{allenai/MolmoAct2-BimanualYAM}) and fine-tune it on our demonstrations (Appendix~\ref{app:real}).

\begin{table}[h]
\centering
\caption{Action-chunking configuration per setting. The critic and the
guidance network share the same horizon. $\dagger$ For the details, see Appendix \ref{app:real}.}
\label{tab:horizon}
\begin{tabular}{lccc}
\toprule
Setting & Chunk length & Executed steps & Critic/guidance horizon \\
\midrule
SmolVLA + LIBERO        & 50 & 10 & 10 \\
$\pi_{0.5}$ + RoboCasa  & 50 & 50 & 50 \\
MolmoAct2 + LIBERO-Pro  & 10 & 10 & 10 \\
MolmoAct2 + real robot $^\dagger$   & 30 & $\sim$15 (async) & 29\\
\bottomrule
\end{tabular}
\end{table}

\subsection{Simulation benchmarks}
\label{app:simul}

\subsubsection{Critics}
We adopt the critic architecture from QPILOTS~\citep{ruan2026qpilots}: an ensemble of 10 Q-functions, each a four-layer CNN encoder followed by an MLP, aggregated pessimistically as $\bar{Q} = \mathrm{mean}(Q_j) - 0.5\mathrm{std}(Q_j)$.
We train the critic for 10K iterations using the objective in \eqref{eqn:sarsa_td} with $\gamma=0.99$, a delayed target critic updated by EMA with decay $0.995$, and Adam with a learning rate of $3\times10^{-4}$.
The VLA predicts an action chunk $\va_0$ but executes only a prefix of it before replanning, whose length is model-specific (Table~\ref{tab:horizon}). The critic's action input is this executed prefix rather than the full predicted chunk.

\subsubsection{AGF}
The guidance network is trained on the same rollout dataset used for critic training for 5k rollout updates for SmolVLA and $\pi_{0.5}$, and 1k updates for MolmoAct2 using AdamW with a learning rate of $3\times10^{-4}$, with intermediate flow states generated by the frozen VLA policy under the current guidance network and adjoint targets computed with the frozen critic.
Note that training the guidance network requires no additional rollouts since it reuses the rollout data collected for critic training. Therefore, the entire AGF pipeline after data collection runs offline without further robot interaction.
For particle-based adjoint estimation, we use $M=4$ particles with $\sigma=0.02$.
RoboCasa evaluation uses the default task-specific episode horizons.
We set $\beta_t = 1$ during training, so that controlled trajectories are generated with $\vu_t = -g_\phi(\vs, \va_t, t)$, and absorb the guidance scale into the inference-time weight $w$, so that the deployed control is $\vu_t = -w g_\phi(\vs, \va_t, t)$.

\subsubsection{Baselines}
\label{app:baselines}

All inference-time baselines share the same pretrained VLA, the same task-specific critic ensemble, and the same number of action-generation steps as AGF; they differ only in how the critic signal is injected during generation.

\textbf{Q-BoN (Best-of-$N$).} Q-BoN applies the critic only at the terminal step. For each environment step, we sample $N$ action chunks $\{\va_0^{(i)}\}_{i=1}^N$ from the unguided pretrained flow with independent initial noises, evaluate the aggregated critic $\bar{Q}(\vs, \va_0^{(i)})$ for each candidate, and execute the chunk with the highest value. Q-BoN requires no back-propagation and uses intermediate generation as is, but its per-step cost scales linearly with $N$. Analogously to the guidance-weight sweep of the gradient-based methods, we sweep $N \in \{4, 8, 16\}$ and select the best value per suite; the selected values are listed in Table~\ref{tab:guidance_weights}.

\textbf{QDPS.} QDPS adapts diffusion posterior sampling~\citep{chung2023diffusion} to critic guidance. Specifically, at each flow step, the critic is evaluated at the Tweedie estimate $\hat{\va}_{0|t}$ and its gradient is back-propagated through the flow model to the intermediate action, $\vg_t = \nabla_{\va_t} \bar{Q}(\vs, \hat{\va}_{0|t})$, which is then added to the velocity with weight $w$. Unlike QGF, QDPS retains the Jacobian of the Tweedie map, requiring a backward pass through $\vv_\theta$ at every step, which is computationally more expensive.

\textbf{QGF.} QGF~\citep{zhou2026test} further approximates the Jacobian of the Tweedie map by the identity ({\eqref{eqn:qgf}}), evaluating $\vg_t = \partial \bar{Q}(\vs, \hat{\va}_{0|t}) / \partial \hat{\va}_{0|t}$ and injecting it directly (Algorithm~{\ref{alg:qgf_inf}}). This removes the backward pass through the flow but back-propagates through the critic ensemble at every step. As noted in Section~{\ref{sec:experiments}}, we treat QPILOTS~\citep{ruan2026qpilots} and GAF~\citep{yang2026guided} as variants of QGF sharing the same core update rule.

\textbf{QAM.} QAM~\citep{li2026qlearning} amortizes the critic signal into the policy itself: the flow policy is fine-tuned with adjoint-matching supervision so that no critic is needed at inference. We fine-tune the policy with LoRA~\citep{hu2022lora} (rank 16) using AdamW with learning rate $1\times10^{-4}$, weight decay $1\times10^{-4}$, and gradient clipping at norm $1.0$, for 4{,}000 steps under the same wall-clock budget as AGF training. Following the original method, the adjoint supervision is clipped element-wise at magnitude $1.0$ for numerical stability; for stable adaptation of the large pretrained VLA, we additionally warm up training with a behavior-cloning mixture ($\lambda_{\max}=0.5$ over the first 200 steps). Since the guidance strength is absorbed into the fine-tuned weights, QAM has no inference-time hyperparameter and reports a single result across both calibration settings.

\begin{algorithm}[h]
\caption{\prev{Training procedure of AGF}}
\label{alg:agf}
\begin{algorithmic}[1]
\Require frozen flow $\vv_\theta$, critic $Q_\omega$, guidance network $g_\phi$, guidance strength $\beta_t$, time grid $\{t_n\}_{n=0}^{N}$ with $t_0=0$ (clean) and $t_N=1$ (noise), particles $M$, perturbation scale $\sigma$
\For{each training iteration}
    \State sample $\vs$; generate $\{\va_{t_n}\}$ by integrating $d\va_t=[\vv_\theta(\va_t,\vs,t)-\beta_t g_\phi(\vs,\va_t,t)]dt$
    \State $\vlamb \gets \nabla_{\va_0} Q_\omega(\vs, \va_0)$ \Comment{terminal seed at the clean boundary}
    \For{$n = 1, \dots, N$} \Comment{clean $\to$ noise}
        \State $\widehat{J} \gets \frac{1}{M}\sum_{m=1}^{M} \partial \vv_\theta(\va_{t_n}+\sigma\veps_m,\vs,t_n)/\partial\va$, \quad $\veps_m \sim \mathcal{N}(0,\rmI)$ 
        \State $\vlamb \gets (\rmI + \Delta t \widehat{J})^\top \vlamb$; \quad store $(\va_{t_n}, \vlamb)$ \Comment{VJP through $\vv_\theta$}
    \EndFor
    \State update $\phi$ with $\mathcal{L}_{\mathrm{AGF}}(\phi)=\sum_n \big\| g_\phi(\vs,\va_{t_n},t_n) - \operatorname{sg}[\vlamb_{t_n}]\big\|_2^2$
\EndFor
\end{algorithmic}
\end{algorithm}

\noindent
\begin{minipage}[t]{0.49\linewidth}
\vspace{0pt}
\begin{algorithm}[H]
\caption{\prev{QGF guidance (inference)}}
\label{alg:qgf_inf}
\begin{algorithmic}[1]
\Require frozen flow $\vv_\theta$, critic $Q_\omega$, guidance weight $w$, step $\Delta t < 0$
\State $\va_1 \sim \mathcal{N}(0,\rmI)$
\For{$t = 1, 1+\Delta t, \dots$ until $t=0$}
    \State $\hat\va_{0|t} \gets \va_t - t\vv_\theta(\va_t,\vs,t)$
    \State $\vg \gets \nabla_{\hat\va_{0|t}} Q_\omega(\vs,\hat\va_{0|t})$ \Comment{back-prop}
    \State $\va_{t+\Delta t} \gets \va_t + \Delta t\left[\vv_\theta(\va_t,\vs,t) - w\vg\right]$
\EndFor
\State \Return $\va_0$
\end{algorithmic}
\end{algorithm}
\end{minipage}\hfill
\begin{minipage}[t]{0.49\linewidth}
\vspace{0pt}
\begin{algorithm}[H]
\caption{\prev{AGF guidance (inference, ours)}}
\label{alg:agf_inf}
\begin{algorithmic}[1]
\Require frozen flow $\vv_\theta$, guidance network $g_\phi$, guidance weight $w$, step $\Delta t < 0$
\State $\va_1 \sim \mathcal{N}(0,\rmI)$
\For{$t = 1, 1+\Delta t, \dots$ until $t=0$}
    \State $\vg \gets g_\phi(\vs,\va_t,t)$ \Comment{single forward pass}
    \State $\va_{t+\Delta t} \gets \va_t + \Delta t\left[\vv_\theta(\va_t,\vs,t) - w\vg\right]$
\EndFor
\State \Return $\va_0$
\end{algorithmic}
\end{algorithm}
\end{minipage}

\newpage
\subsection{Real Robot Experiments}
\label{app:real}

\begin{wrapfigure}{r}{0.48\textwidth}
  \vspace{-\intextsep}
  \centering
  \includegraphics[width=0.46\textwidth]{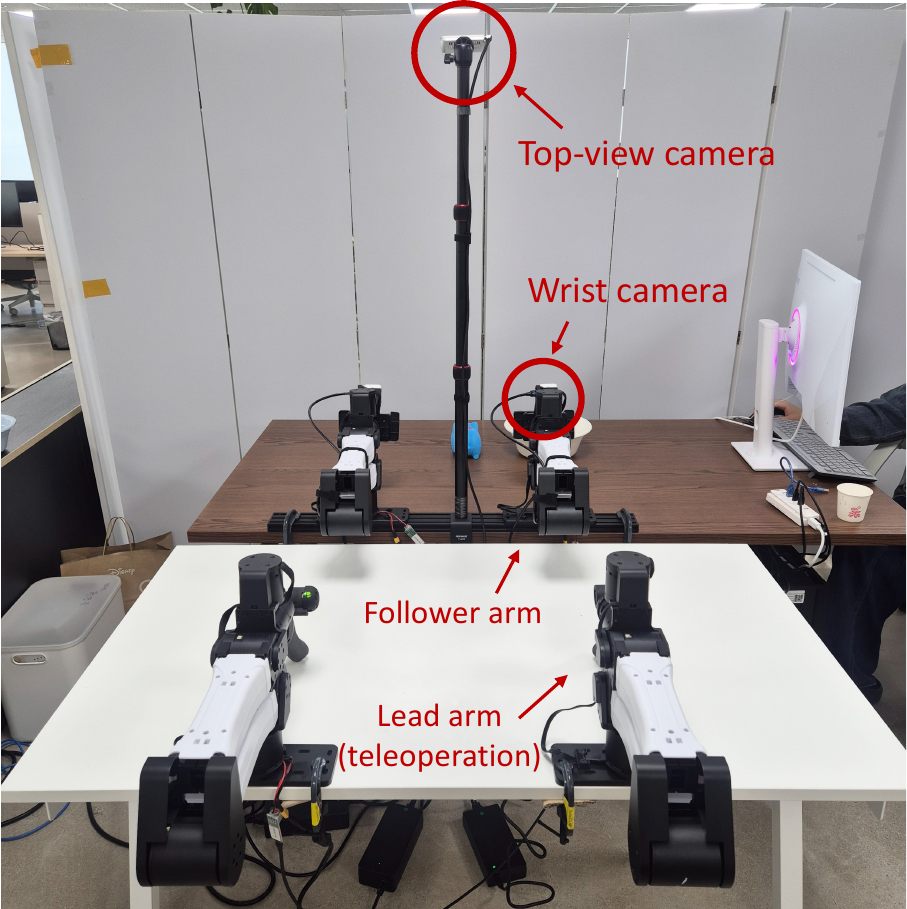}
  \caption{Real-robot setup. Bimanual platform with the top-view camera; the platform is used in a single-arm configuration and only one
  follower arm is controlled. Wrist camera on that arm.}
  \label{fig:setup}
  \vspace{-\intextsep}
\end{wrapfigure}

\textbf{Robot setup.}
Apart from the task set, we follow the real-robot setup of MolmoAct2~\citep{fang2026molmoact2}: a YAM 6-DoF robot arm equipped with a parallel-jaw gripper, resulting in a 7-dimensional action space (six joint positions and one gripper command). The platform is bimanual, but we use it in a single-arm configuration and control only the right arm throughout data collection and evaluation.
The policy observes two RGB camera views (top and right) together with the robot's proprioceptive state, and receives a language instruction describing the task.
Raw joint states and teleoperation commands are recorded at approximately 200Hz and linearly interpolated onto the 30Hz camera timestamps, so all training and evaluation data are at 30Hz. 
At deployment, the policy runs in a 30Hz control loop and predicts 30-step action chunks (1s). Chunks are generated asynchronously: a new chunk is requested every 0.5s and, once available, replaces the remaining action queue, so roughly 15 steps of each chunk are executed under nominal inference latency.
All policies run on a single NVIDIA RTX~4090, and all latencies are measured on the same machine.

\textbf{Dataset and training.}
We collect approximately 100 episodes for each of four tasks (eight data-collection variants differing in object instances, e.g., different balls for \emph{pnp-plate}, grouped into four tasks). The exact statistics are in Table~\ref{tab:real_world_dataset}.
We fully fine-tune MolmoAct2 (starting from \texttt{allenai/MolmoAct2-BimanualYAM}; all parameters updated except the
frozen embeddings) on the collected demonstrations through behavior cloning, using AdamW (lr $10^{-5}$, cosine decay to $10^{-6}$ with 200-step warmup, no weight decay) with batch size 108 for 20,000 steps on a single B200 GPU and standard image augmentations (color jitter, sharpness jitter, random affine within $\pm5^\circ$ and $\pm5\%$ translation); we use the checkpoint at 14,000 steps.
Following the per-task protocol of the simulated experiments, critics and guidance networks are trained for the two evaluation tasks, using rollouts of the fine-tuned policy on each task.
The critic takes the full 29-step action chunk as input (Table~\ref{tab:horizon})\footnote{While the policy predicts 30-step chunks, the longest executed action sequence observed in the recorded data was 29 steps, so we set the critic horizon to 29 accordingly.}. Because a new chunk replaces the action queue roughly every 0.5s under asynchronous execution, the recorded per-chunk action sequences are typically 15--20 steps rather than the full 29; we pad them to 29 steps by repeating the last predicted action. Discarding incomplete chunks instead would remove nearly all records, including the terminal ones where success and failure signals concentrate.
The two evaluation tasks are ``pick up the ball and place it on the plate'' (\textit{pnp-plate}), where a black ball must be picked and placed onto a plate, and ``open the box, put the ball inside, and close it'' (\textit{open-pnp-close}), a long-horizon sequence in which the robot opens a box, picks and places the black ball inside, and closes the box.

\begin{table}[t]
\centering
\vspace{-10pt}
\caption{Real-world dataset statistics. $^\dagger$denotes the two evaluation tasks.}
\label{tab:real_world_dataset}
\resizebox{0.5\linewidth}{!}{%
\begin{tabular}{lccc}
\toprule
\textbf{Task} & \textbf{Episodes} & \textbf{Frames} & \textbf{Avg. Length (s)} \\
\midrule
pnp-plate$^\dagger$      & 106 & 47{,}002 & 14.8 \\
pnp-box                  & 108 & 47{,}616 & 14.7 \\
open-pnp-close$^\dagger$ & 108 & 91{,}614 & 28.3 \\
open-place               & 102 & 82{,}313 & 26.9 \\
\midrule
Overall                  & 424 & 268{,}545 & 21.2 \\
\bottomrule
\end{tabular}%
}
\vspace{-8pt}
\end{table}

\textbf{Evaluation details.}
{\textit{pnp-plate} is evaluated over 50 paired episodes and \textit{open-pnp-close} over 25}, with a time limit of 30s for \textit{pnp-plate} and 45s for \textit{open-pnp-close}. 
Episodes differ in their initial configuration: the ball and plate positions vary across \textit{pnp-plate} episodes, and the ball position varies across \textit{open-pnp-close} episodes. All methods are nevertheless evaluated on the same set of initial configurations, enabling paired comparison.
We report the success rate (\%). For \textit{pnp-plate}, an episode is successful once the ball lies entirely on the plate and the gripper has opened. For \textit{open-pnp-close}, an episode is successful only if the ball is placed inside the box and the box is fully closed.
Latencies are measured with CUDA synchronization inside the deployed control loop (10 denoising steps, two $480{\times}640$ camera views), with each method at its deployed strength; QGF uses a vectorized (vmap) critic-ensemble implementation.

\textbf{Paired evaluation.}
We built a dedicated interface for paired evaluation across the compared methods (Figure~\ref{fig:ref}). Panel~(A) tracks which scenarios have been evaluated. Panel~(B) shows the reference camera view for initialization, captured when the first method is evaluated; subsequent methods are initialized against this reference. Specifically, we compare the reference view~(B) with the current view~(C) and compute their difference map~(D), which ensures that each scenario is initialized consistently across all methods.

\begin{figure}[t]
    \centering
    \includegraphics[width=0.8\linewidth]{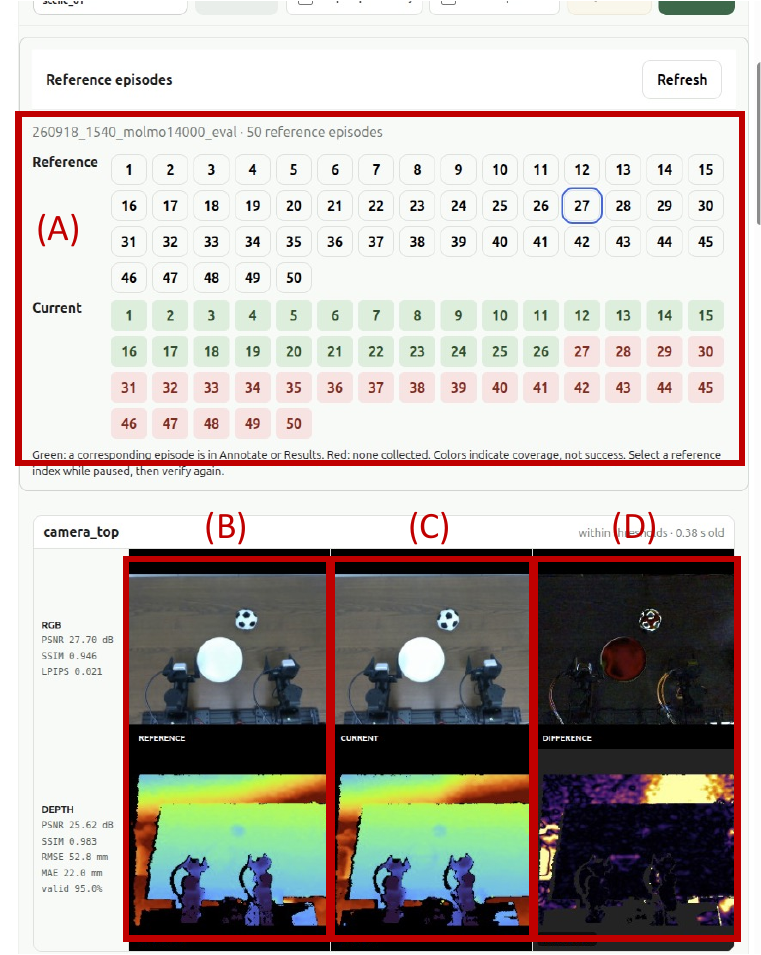}
    \caption{\textbf{Paired evaluation interface.} (A) Coverage tracker over the 50 reference episodes. (B) Reference view captured when the first method is evaluated, (C) the current scene before a subsequent method is run, and (D) their difference map, used to verify that each scenario is initialized consistently across methods.}
    \label{fig:ref}
\end{figure}

\section{Additional results and analysis}

\subsection{Per-condition LIBERO-Pro Results}
\label{app:libero_pro}
Table~\ref{tab:libero_pro} reports the per-condition LIBERO-Pro breakdown; the suite-averaged MolmoAct2 results correspond to the LIBERO-Pro rows of Table~\ref{tab:eval}.
The Clean and Semantic conditions are near saturation for MolmoAct2, leaving little room for guidance, whereas the gains of all guidance methods concentrate on the harder conditions, most notably Position, where AGF ($M=4$) improves the pretrained policy from 23.0 to 27.0 under task-level calibration.
Across all five conditions, AGF remains on par with QDPS and QGF under both calibration settings, extending the suite-level observation of Section~\ref{sec:experiments} to individual visual perturbations.
Q-BoN collapses uniformly across conditions; we analyze this failure in Appendix~\ref{app:qbon}.

\begin{table*}[t]
\centering
\caption{
\textbf{Per-condition LIBERO-Pro results with MolmoAct2.}
Task success rate (\%) averaged over the four LIBERO suites for each visual condition, under suite-level and task-level calibration; the Avg.\ column corresponds to the LIBERO-Pro rows of Table~\ref{tab:eval}.
Bold and underline mark the best and second-best inference-time method per column within each calibration setting; our method is shaded.
}
\label{tab:libero_pro}
\footnotesize
\setlength{\tabcolsep}{5pt}
\resizebox{0.9\linewidth}{!}{%
\begin{tabular}{llcccccc}
\toprule
Calibration
& Method
& Clean
& Position
& Object
& Semantic
& Environment
& Avg. \\
\midrule

\multirow{6}{*}{Suite-level}
& Pretrained VLA & 97.2 & 23.0 & 83.5 & 97.2 & 74.0 & 75.0 \\
& Q-BoN ($N{=}4$) & 8.2 & 1.2 & 2.0 & 6.5 & 3.8 & 4.3 \\
& QDPS           & \underline{97.5} & 24.0 & 84.0 & \textbf{98.2} & \textbf{76.0} & 76.0 \\
& QGF            & 97.2 & 24.2 & \textbf{85.2} & \underline{98.0} & \underline{75.8} & \underline{76.1} \\
& \cellcolor{oursbg}AGF ($M{=}1$) & \cellcolor{oursbg}97.0 & \cellcolor{oursbg}\underline{24.8} & \cellcolor{oursbg}\underline{84.8} & \cellcolor{oursbg}97.5 & \cellcolor{oursbg}\underline{75.8} & \cellcolor{oursbg}76.0 \\
& \cellcolor{oursbg}AGF ($M{=}4$) & \cellcolor{oursbg}\textbf{98.0} & \cellcolor{oursbg}\textbf{25.8} & \cellcolor{oursbg}84.0 & \cellcolor{oursbg}97.8 & \cellcolor{oursbg}75.2 & \cellcolor{oursbg}\textbf{76.2} \\
\midrule

\multirow{6}{*}{Task-level}
& Pretrained VLA & 97.2 & 23.0 & 83.5 & 97.2 & 74.0 & 75.0 \\
& Q-BoN ($N{=}4$) & 8.2 & 1.2 & 2.0 & 6.5 & 3.8 & 4.3 \\
& QDPS           & \underline{97.8} & \underline{26.8} & 85.2 & 98.0 & \textbf{79.2} & \underline{77.4} \\
& QGF            & \textbf{98.2} & \textbf{27.0} & \underline{85.5} & \textbf{98.5} & 77.8 & \underline{77.4} \\
& \cellcolor{oursbg}AGF ($M{=}1$) & \cellcolor{oursbg}97.2 & \cellcolor{oursbg}26.2 & \cellcolor{oursbg}\textbf{85.8} & \cellcolor{oursbg}98.0 & \cellcolor{oursbg}\underline{78.8} & \cellcolor{oursbg}77.2 \\
& \cellcolor{oursbg}AGF ($M{=}4$) & \cellcolor{oursbg}97.5\hspace{0pt} & \cellcolor{oursbg}\textbf{27.0} & \cellcolor{oursbg}\textbf{85.8} & \cellcolor{oursbg}\underline{98.2} & \cellcolor{oursbg}\textbf{79.2} & \cellcolor{oursbg}\textbf{77.5} \\
\bottomrule
\end{tabular}%
}
\end{table*}

\subsection{Q-BoN Over-optimization on MolmoAct2}
\label{app:qbon}

On MolmoAct2, Q-BoN collapses across all four LIBERO-Pro suites (Table~\ref{tab:eval}), although the same critic provides effective guidance for QGF and AGF. We verified that Q-BoN uses the same critic checkpoint and pessimistic aggregation, with the correct ranking polarity and independently sampled candidates. 
The failure instead arises from best-of-$N$ over-optimization. QGF and AGF make local corrections around the base-policy trajectory, where the critic is sufficiently supported by its finite rollout data. 
In contrast, maximizing over $N$ independent samples increasingly favors low-density candidates with positive value-estimation errors, resulting in actions that score highly under the critic but fail in execution.

Table~\ref{tab:qbon_diag} shows this effect across all four suites. As $N$ increases, the predicted advantage of the selected candidate, $\bar{Q}{\mathrm{best}}-\bar{Q}{\mathrm{mean}}$, grows monotonically, while the actual success rate collapses, which is a characteristic signature of over-optimization against an imperfect value estimate. Task-level selection does not resolve the issue because for every task with nonzero success, $N{=}4$ is already optimal, with one tie, while the remaining tasks fail for all evaluated values of $N$. Consequently, the suite- and task-level results in Table~\ref{tab:eval} coincide.

\begin{table}[h]
\centering
\caption{
\textbf{Best-of-$N$ over-optimization on MolmoAct2.}
Suite-level success rate (\%) and the critic's apparent advantage of the selected candidate ($\bar{Q}_{\mathrm{best}}-\bar{Q}_{\mathrm{mean}}$, averaged over episodes) as the number of candidates $N$ grows. Across all four suites, the apparent advantage increases monotonically while the actual success rate collapses.
}
\label{tab:qbon_diag}
\footnotesize
\begin{tabular}{lcccc}
\toprule
& & \multicolumn{3}{c}{$N$} \\
\cmidrule(lr){3-5}
Suite & & 4 & 8 & 16 \\
\midrule
\multirow{2}{*}{Goal}      & Success rate & 10.4 & 2.0 & 0.6 \\
                           & $\bar{Q}_{\mathrm{best}}-\bar{Q}_{\mathrm{mean}}$ & 0.224 & 0.344 & 0.446 \\
\midrule
\multirow{2}{*}{Object}    & Success rate & 1.0 & 0.0 & 0.0 \\
                           & $\bar{Q}_{\mathrm{best}}-\bar{Q}_{\mathrm{mean}}$ & 0.218 & 0.309 & 0.359 \\
\midrule
\multirow{2}{*}{Spatial}   & Success rate & 5.8 & 1.2 & 0.2 \\
                           & $\bar{Q}_{\mathrm{best}}-\bar{Q}_{\mathrm{mean}}$ & 0.253 & 0.363 & 0.438 \\
\midrule
\multirow{2}{*}{LIBERO-10} & Success rate & 0.2 & 0.0 & 0.0 \\
                           & $\bar{Q}_{\mathrm{best}}-\bar{Q}_{\mathrm{mean}}$ & 0.105 & 0.137 & 0.166 \\
\bottomrule
\end{tabular}
\end{table}

\subsection{Leave-One-Task-Out Calibration}
\label{app:loto}

Suite-level calibration in Table~\ref{tab:eval} selects a single guidance strength using all tasks within a suite. While this evaluates whether one shared strength can work across heterogeneous tasks, each task still contributes to the selection, so it does not directly measure transfer to a task unseen during calibration.

To evaluate this setting, we conduct a leave-one-task-out (LOTO) test: for each task in a suite, the guidance strength is selected to maximize the average success rate over the remaining tasks and evaluated on the held-out task, with results averaged over all held-out tasks. We reuse the per-task success rates from the guidance-strength sweep in Appendix~\ref{app:weights}, and each method selects within its own sweep range.

Table~\ref{tab:loto} reports the results. On the LIBERO and RoboCasa atomic tasks, AGF outperforms QGF on every held-out suite. Its advantage comes from stability rather than peak performance: on most suites AGF selects the same strength regardless of which task is held out, so excluding a task from calibration costs it almost nothing, whereas QGF loses a larger share of its task-level gains once per-task selection is unavailable, which is consistent with its fluctuating response to the guidance strength (Figure~\ref{fig:weight}). On LIBERO-Pro with MolmoAct2 (right block), the two methods remain comparable under LOTO, mirroring their suite-level parity in Table~\ref{tab:eval}, and both degrade only mildly from task-level calibration. LIBERO-10 is the exception in both settings, where task heterogeneity makes any single shared strength less effective. Overall, AGF is ahead on the large majority of the nine LOTO comparisons, and the results support suite-level calibration as a practical proxy for deployment to unseen tasks without task-specific strength selection.

\begin{table}[t]
\centering
\caption{
\textbf{Leave-one-task-out (LOTO) calibration.}
Success rate (\%) when the guidance strength is selected on all but one task and evaluated on the held-out task, averaged over held-out tasks. Bold marks the better LOTO result per suite; our method is shaded.
}
\label{tab:loto}
\footnotesize
\setlength{\tabcolsep}{3pt}
\resizebox{\textwidth}{!}{%
\begin{tabular}{llcccc|c|cccc}
\toprule
& & \multicolumn{4}{c|}{LIBERO (SmolVLA)} & \makecell{RoboCasa\\($\pi_{0.5}$)} & \multicolumn{4}{c}{LIBERO-Pro (MolmoAct2)} \\
\cmidrule(lr){3-6} \cmidrule(lr){7-7} \cmidrule(lr){8-11}
Calibration & Method & Goal & Object & Spatial & LIBERO-10 & Atomic & Goal & Object & Spatial & LIBERO-10 \\
\midrule
\multirow{2}{*}{Task-level}
& QGF & 81.4 & 95.6 & 77.6 & 39.2 & 49.6 & 77.4 & 85.8 & 77.0 & 69.4 \\
& \cellcolor{oursbg}AGF & \cellcolor{oursbg}83.2 & \cellcolor{oursbg}95.4 & \cellcolor{oursbg}75.6 & \cellcolor{oursbg}42.0 & \cellcolor{oursbg}49.4 & \cellcolor{oursbg}77.8 & \cellcolor{oursbg}84.6 & \cellcolor{oursbg}79.0 & \cellcolor{oursbg}68.8 \\
\midrule
\multirow{2}{*}{Suite-level}
& QGF & 80.6 & 93.6 & 73.0 & 36.0 & 46.3 & 76.0 & 84.6 & 76.0 & 67.8 \\
& \cellcolor{oursbg}AGF & \cellcolor{oursbg}81.0 & \cellcolor{oursbg}93.4 & \cellcolor{oursbg}74.6 & \cellcolor{oursbg}36.8 & \cellcolor{oursbg}46.6 & \cellcolor{oursbg}76.4 & \cellcolor{oursbg}83.8 & \cellcolor{oursbg}77.4 & \cellcolor{oursbg}67.0 \\
\midrule
\multirow{2}{*}{LOTO}
& QGF & 80.6 & 91.8 & 69.6 & 31.8 & 46.3 & 74.8 & \textbf{84.6} & 76.0 & \textbf{67.8} \\
& \cellcolor{oursbg}AGF & \cellcolor{oursbg}\textbf{81.0} & \cellcolor{oursbg}\textbf{93.4} & \cellcolor{oursbg}\textbf{74.6} & \cellcolor{oursbg}\textbf{33.6} & \cellcolor{oursbg}\textbf{46.6} & \cellcolor{oursbg}\textbf{75.2} & \cellcolor{oursbg}83.0 & \cellcolor{oursbg}\textbf{77.4} & \cellcolor{oursbg}65.4 \\
\bottomrule
\end{tabular}%
}
\end{table}

\subsection{Paired Significance Tests}
\label{app:stats}

All comparisons use the suite-level calibration weights of Table~\ref{tab:guidance_weights} and are episode-level paired tests: both methods are evaluated on identical episode seeds on all three benchmarks{:} LIBERO (10 tasks $\times$ 50 episodes per suite, SmolVLA), RoboCasa (18 atomic tasks $\times$ 50 episodes, $\pi_{0.5}$), and LIBERO-Pro (10 tasks $\times$ 5 conditions $\times$ 10 episodes per suite, MolmoAct2){,} with a single training seed throughout; AGF denotes the $M{=}4$ variant as in the main text.
Each episode seed fixes both the initial action noise and the environment initialization, and for each task all methods are evaluated on the same RTX 4090 to avoid hardware-dependent nondeterminism, so that per-episode outcomes are directly comparable across methods.
We report the mean success-rate difference $\Delta$SR (percentage points, positive favors AGF), exact McNemar $p$-values, and 95\% paired bootstrap confidence intervals (20{,}000 resamples), per suite and pooled within each benchmark.
Pooled over suites, AGF's improvement over the pretrained policy is significant on LIBERO and LIBERO-Pro and directionally positive on RoboCasa, while AGF and QGF are statistically indistinguishable on every suite of every benchmark.

\begin{table}[h]
\centering
\caption{\textbf{Paired significance tests.} Episode-level paired comparisons on identical seeds across all three benchmarks; positive $\Delta$SR favors AGF.}
\label{tab:paired_stats}
\footnotesize
\setlength{\tabcolsep}{5pt}
\begin{tabular}{llccc}
\toprule
Comparison & Suite & $\Delta$SR & McNemar $p$ & 95\% CI \\
\midrule
\multicolumn{5}{l}{\textit{LIBERO (SmolVLA)}} \\
\midrule
\multirow{5}{*}{Base vs.\ AGF}
& Goal      & $+4.4$ & $0.021$ & $[+0.8, +8.0]$ \\
& Object    & $+3.8$ & $0.020$ & $[+0.8, +6.8]$ \\
& Spatial   & $+3.0$ & $0.191$ & $[-1.2, +7.2]$ \\
& LIBERO-10 & $+3.2$ & $0.195$ & $[-1.2, +7.8]$ \\
& \textbf{Pooled} & $\mathbf{+3.6}$ & $\mathbf{3.4\times10^{-4}}$ & $\mathbf{[+1.7, +5.6]}$ \\
\cmidrule(lr){1-5}
\multirow{5}{*}{QGF vs.\ AGF}
& Goal      & $+0.4$ & $0.913$ & $[-3.2, +4.0]$ \\
& Object    & $-0.2$ & $1.000$ & $[-2.6, +2.2]$ \\
& Spatial   & $+1.6$ & $0.526$ & $[-2.8, +6.0]$ \\
& LIBERO-10 & $+0.8$ & $0.801$ & $[-3.8, +5.4]$ \\
& \textbf{Pooled} & $+0.7$ & $0.542$ & $[-1.3, +2.6]$ \\
\midrule
\multicolumn{5}{l}{\textit{RoboCasa ($\pi_{0.5}$)}} \\
\midrule
Base vs.\ AGF & Atomic & $\mathbf{+2.6}$ & $\textbf{0.075}$ & $[\mathbf{-0.2, +5.2}]$ \\
QGF vs.\ AGF  & Atomic & $+0.2$ & $0.939$ & $[-2.6, +3.0]$ \\
\midrule
\multicolumn{5}{l}{\textit{LIBERO-Pro (MolmoAct2)}} \\
\midrule
\multirow{5}{*}{Base vs.\ AGF}
& Goal      & $+1.0$ & $0.359$ & $[-0.6, +2.8]$ \\
& Object    & $+0.8$ & $0.424$ & $[-0.6, +2.2]$ \\
& Spatial   & $+1.6$ & $0.022$ & $[+0.4, +3.0]$ \\
& LIBERO-10 & $+1.2$ & $0.286$ & $[-0.6, +3.0]$ \\
& \textbf{Pooled} & $\mathbf{+1.1}$ & $\mathbf{5.9\times10^{-3}}$ & $\mathbf{[+0.4, +1.9]}$ \\
\cmidrule(lr){1-5}
\multirow{5}{*}{QGF vs.\ AGF}
& Goal      & $+0.4$ & $0.839$ & $[-1.6, +2.2]$ \\
& Object    & $-0.8$ & $0.388$ & $[-2.2, +0.6]$ \\
& Spatial   & $+1.4$ & $0.167$ & $[-0.2, +3.2]$ \\
& LIBERO-10 & $-0.8$ & $0.557$ & $[-2.8, +1.2]$ \\
& \textbf{Pooled} & $+0.0$ & $1.000$ & $[-0.8, +0.9]$ \\
\bottomrule
\end{tabular}
\end{table}

\subsection{Guidance Weight Selection}
\label{app:weights}

For each method we use a single guidance weight per benchmark suite, selected by the suite-level average success rate. 
The guidance term of each method is normalized by the magnitude of the flow velocity predicted by the pretrained VLA, so that all methods share the common sweep range $[0.25, 4.0]$; the selected values are listed in Table~\ref{tab:guidance_weights}.

\begin{table}[t]
\centering
\caption{
\textbf{Selected guidance hyperparameters.}
Suite-level selected guidance weight $w$ for each gradient-based method and the selected number of candidates $N$ for Q-BoN. All gradient-based methods share the sweep range $[0.25, 4.0]$ after flow-velocity normalization; Q-BoN sweeps $N \in \{4, 8, 16\}$. Ties are broken toward the smaller weight.
}
\label{tab:guidance_weights}
\footnotesize
\setlength{\tabcolsep}{4pt}
\resizebox{\textwidth}{!}{%
\begin{tabular}{lcccc|c|cccc}
\toprule
& \multicolumn{4}{c|}{LIBERO (SmolVLA)} & \makecell{RoboCasa\\($\pi_{0.5}$)} & \multicolumn{4}{c}{LIBERO-Pro (MolmoAct2)} \\
\cmidrule(lr){2-5} \cmidrule(lr){6-6} \cmidrule(lr){7-10}
Method & Goal & Object & Spatial & LIBERO-10 & Atomic & Goal & Object & Spatial & LIBERO-10 \\
\midrule
QDPS            & 4.0 & 0.25 & 4.0 & 1.0 & 4.0  & 2.0 & 1.0 & 2.0  & 1.0 \\
QGF             & 4.0 & 1.0  & 2.0 & 1.0 & 0.25 & 2.0 & 0.5 & 0.25 & 0.25 \\
Q-BoN ($N$)     & 8   & 8    & 4   & 4   & 4    & 4   & 4   & 4    & 4 \\
\midrule
\rowcolor{oursbg}
AGF ($M{=}1$)   & 4.0 & 2.0 & 2.0 & 2.0 & 2.0 & 1.0 & 2.0 & 2.0 & 0.5 \\
\rowcolor{oursbg}
AGF ($M{=}4$)   & 4.0 & 2.0 & 4.0 & 4.0 & 4.0 & 2.0 & 4.0 & 1.0 & 2.0 \\
\bottomrule
\end{tabular}%
}
\end{table}

\begin{table}[h]
\centering
\caption{
\textbf{Subtask success on \textit{open-pnp-close}} (25 paired episodes; each subtask judged independently, so a later subtask can succeed after an earlier one fails, e.g., closing the box without placing the ball).
}
\label{tab:real_subtask}
\footnotesize
\begin{tabular}{lccc}
\toprule
Subtask & Base & QGF & AGF \\
\midrule
Open the box & 15/25 & 18/25 & \textbf{25/25} \\
Place the ball inside & 12/25 & 15/25 & \textbf{21/25} \\
Close the box & 17/25 & 14/25 & \textbf{20/25} \\
\midrule
All three (task success) & 11/25 & 12/25 & \textbf{19/25} \\
\bottomrule
\end{tabular}
\end{table}

\subsection{Subtask analysis for real robot long task}
We evaluate AGF on the real-robot system with two tasks: a short pick-and-place (\textit{pnp-plate}) and a long compositional task (\textit{open-pnp-close}) consisting of three subtasks: open the box, place the ball inside, and close the box.
For the compositional task, Table~\ref{tab:real_subtask} scores each subtask independently over the 25 paired episodes.
AGF improves every subtask over the base policy and completes all 25 opening attempts, while the base policy frequently skips or fails intermediate subtasks (e.g., closing the box without placing the ball), which the independent scoring makes visible.
QGF's transferred strength improves opening and placing only marginally and does not improve closing, consistent with its overall lack of gain on this task (Table~\ref{tab:real_robot}).

\subsection{Qualitative comparison for Real robot experiments}
\label{app:robot_qual}
 
We show paired rollouts of Base VLA, QGF, and AGF on both evaluation tasks,
using identical initial conditions within each task.

\textbf{\textit{pnp-plate}.}
Figure~\ref{fig:real_qual_2} shows a different failure mode. All three methods
grasp the ball successfully, so the gap does not arise at the picking stage.
Base VLA and QGF, however, release the ball in transit: the gripper opens
before reaching the plate, and neither recovers within the episode time limit.
AGF transports the ball without dropping it and completes the task. This is
consistent with the subtask breakdown in Table~\ref{tab:real_subtask}, where
AGF's gain concentrates in the transport-and-place stage rather than in
grasping.
 
\textbf{\textit{open-pnp-close}.}
Figure~\ref{fig:real_qual_1} shows that the three methods diverge at the very
first subtask. Base VLA fails all three: it never opens the box, and the
episode ends without the ball being transported or the box closed. QGF opens
nothing either, yet proceeds to pick and place the ball, so the sequence is
executed out of order and the episode is scored as a failure. AGF completes
the three subtasks in order (opening the box, placing the ball inside, and
closing it) within the time limit.

\begin{figure}[t]
    \centering
    \includegraphics[width=\linewidth]{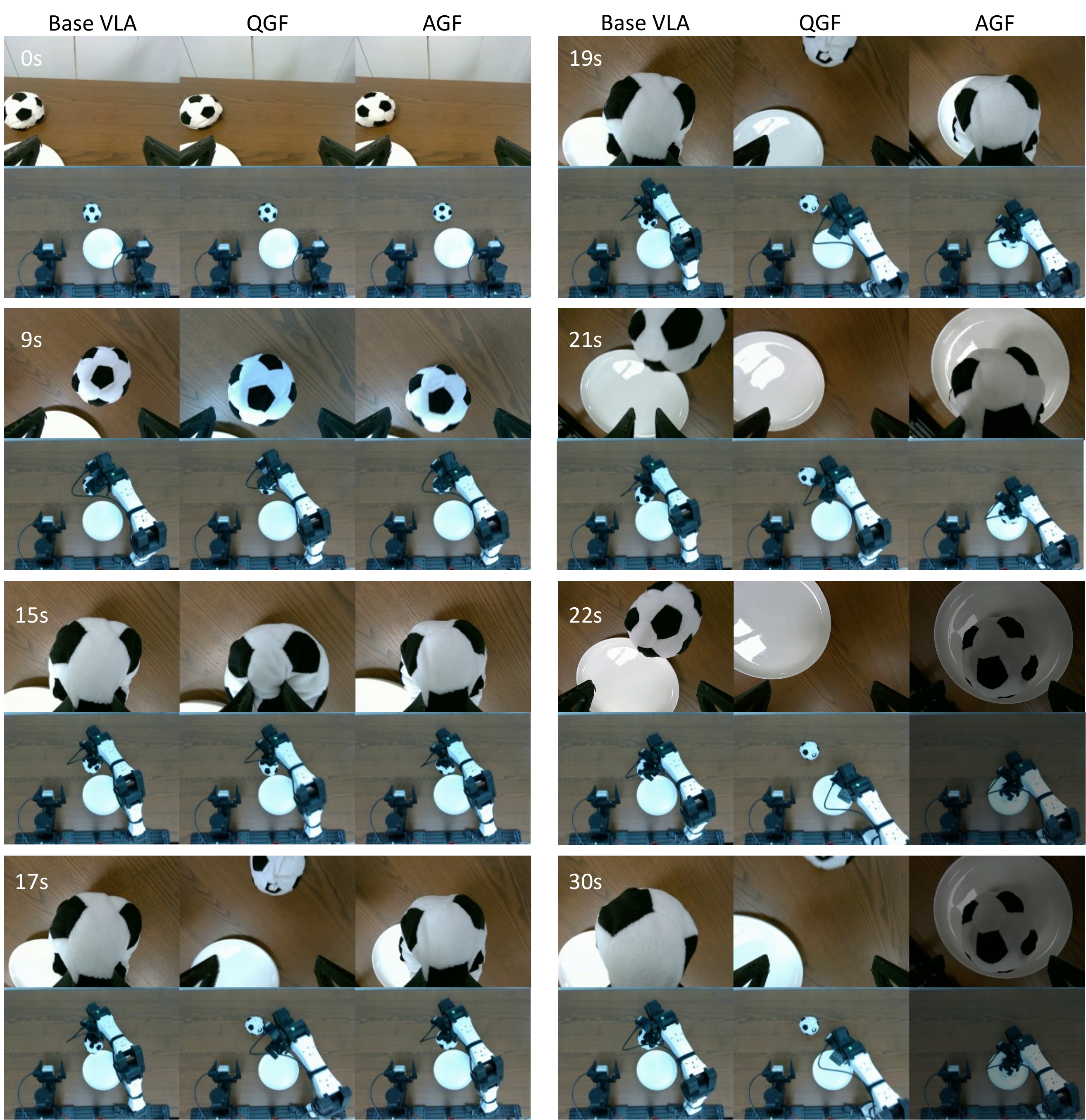}
    \caption{Qualitative rollouts on \textit{pnp-plate}. Eight timestamps from paired episodes of Base VLA, QGF, and AGF, all initialized identically. Within each block the top row is the wrist camera and the bottom row the top-view camera. Frames are dimmed after a method has completed the task.}
    \label{fig:real_qual_2}
\end{figure}

\begin{figure}[t]
    \centering
    \includegraphics[width=\linewidth]{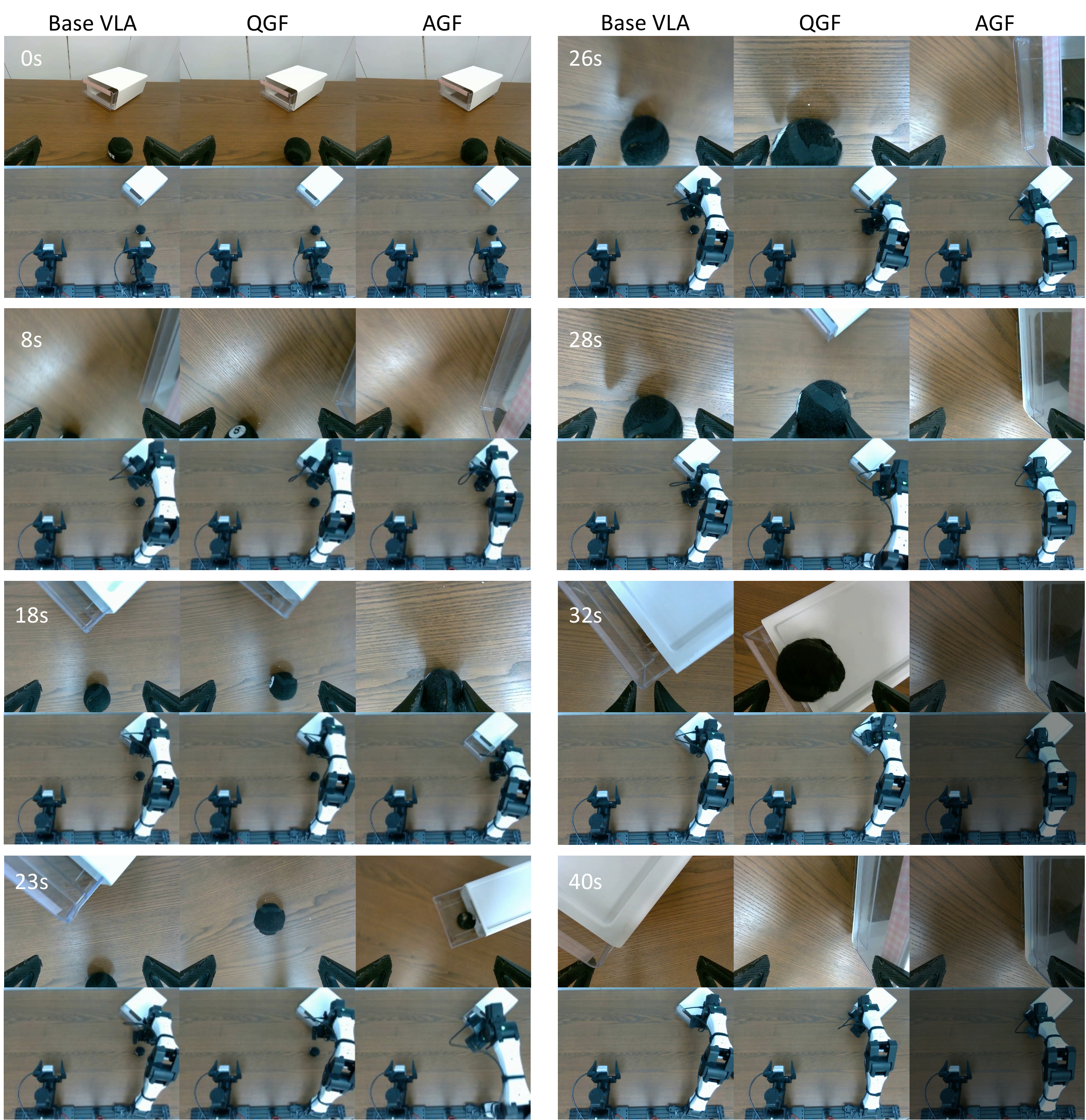}
    \caption{Qualitative rollouts on \textit{open-pnp-close}. Eight timestamps from paired episodes of Base VLA, QGF, and AGF, all initialized identically. Within each block the top row is the wrist camera and the bottom row the top-view camera. Frames are dimmed after a method has completed the task.}
    \label{fig:real_qual_1}
\end{figure}

\section{Limitations}
\label{app:limitations}
AGF inherits two dependencies from its design.
First, following the single-task critic setting of prior work, each task requires training its own critic ensemble and guidance network{. The critic is shared with every guidance baseline; the guidance network is the additional cost AGF pays to remove the critic from deployment, paid once per task rather than once per episode. Amortizing} guidance across tasks with a shared network remains an open direction.
Second, the quality of the guidance is bounded by the quality of the critic: AGF distills the critic's gradient signal and cannot correct for a poorly calibrated value function. 
Finally, on suites with highly heterogeneous tasks such as LIBERO-10, any single deployed guidance strength degrades from per-task calibration (Appendix~\ref{app:loto}), suggesting that strength selection itself could benefit from state-dependent adaptation.

\end{document}

%% file: math_commands.tex
\usepackage{amsmath,amsfonts,bm}

\def\eqref#1{equation~\ref{#1}}

\def\1{\bm{1}}

\def\rmI{{\mathbf{I}}}

\def\va{{\bm{a}}}

\def\vd{{\bm{d}}}

\def\vf{{\bm{f}}}
\def\vg{{\bm{g}}}

\def\vs{{\bm{s}}}

\def\vu{{\bm{u}}}
\def\vv{{\bm{v}}}

\def\vz{{\bm{z}}}

\def\veps{{\bm{\epsilon}}}
\def\vlamb{{\bm{\lambda}}}

\def\mA{{\bm{A}}}
\def\mB{{\bm{B}}}

\def\mE{{\bm{E}}}

\DeclareMathAlphabet{\mathsfit}{\encodingdefault}{\sfdefault}{m}{sl}
\SetMathAlphabet{\mathsfit}{bold}{\encodingdefault}{\sfdefault}{bx}{n}

\DeclareMathOperator*{\argmax}{arg\,max}